\definecolor{shadecolor}{gray}{0.9}
\definecolor{shadecolor}{gray}{0.9}
\numberwithin{equation}{section}
\newtheorem{theorem}{Theorem}[section]
\newtheorem{lemma}{Lemma}[section]
\newtheorem{assumption}[theorem]{Assumption}
\theoremstyle{remark}
\def\reals{\mathbb{R}}
\let\hat\widehat
\let\tilde\widetilde
\def\({\left(}
\def\){\right)}
\def\[{\left[}
\def\]{\right]}
\let\hat\widehat
\def\htheta{{\hat\theta}}
\def\kdim{{k}}
\def\n{{n}}
\def\reals{{\mathbb R}}
\def\N{{\mathcal N}}
\def\R{{\mathcal R}}
\newcommand{\sign}{\text{sign}}
\newcommand{\eps}{\varepsilon}
\DeclareMathOperator*{\argmin}{argmin}
\begin{document}

\setlength{\parskip}{0.5em}

\begin{frontmatter}
\title{Surfing: Iterative Optimization Over \\ Incrementally Trained Deep Networks}
\runtitle{Surfing}
\begin{aug}
\vskip10pt
\author{\fnms{Ganlin} \snm{Song}\ead[label=e1]{xiaoqian.yang@yale.edu}}
\,
\author{\fnms{Zhou} \snm{Fan}\ead[label=e3]{david.pollard@yale.edu}}
\,
\author{\fnms{John} \snm{Lafferty}\ead[label=e2]{john.lafferty@yale.edu}}
\vskip10pt
\address{
\begin{tabular}{c}
Department of Statistics and Data Science\\
Yale University
\end{tabular}
\\[10pt]
\today\\[5pt]
\vskip10pt
}
\end{aug}

\begin{abstract}
  We investigate a sequential optimization procedure to minimize the empirical risk functional $f_{\hat\theta}(x) = \frac{1}{2}\|G_{\hat\theta}(x) - y\|^2$ for certain families of deep networks $G_{\theta}(x)$. The approach is to optimize a sequence of objective functions that use network parameters obtained during different stages of the training process.  When initialized with random parameters $\theta_0$, we show that the objective  $f_{\theta_0}(x)$ is ``nice'' and easy to optimize with gradient descent. As learning is carried out, we obtain a sequence of generative networks $x \mapsto G_{\theta_t}(x)$ and associated risk functions $f_{\theta_t}(x)$, where $t$ indicates a stage of stochastic gradient descent during training.  Since the parameters of the network do not change by very much in each step, the surface evolves slowly and can be incrementally optimized. The algorithm is formalized and analyzed for a family of expansive networks. We call the procedure {\it surfing} since it rides along the peak of the evolving (negative) empirical risk function, starting from a smooth surface at the beginning of learning and ending with a wavy nonconvex surface after learning is complete.  Experiments show how surfing can be used to find the global optimum and for compressed sensing even when direct gradient descent on the final learned network fails.
\end{abstract}

\end{frontmatter}


\section{Introduction} \label{intro}

Intensive recent research has provided insight into the performance and mathematical properties of deep neural networks, improving understanding of their strong empirical performance on different types of data. Some of this work has investigated gradient descent algorithms that optimize
the weights of deep networks during learning \citep{du2018gradient,du2018deep,davis:18,li:17,li:18}. In this paper we focus on optimization over the inputs to an already trained deep network in order to best approximate a target data point. Specifically, we consider the least squares objective function
\begin{align*}
f_\htheta(x) = \frac{1}{2} \| G_\htheta(x) - y\|^2
\end{align*}
where $G_\theta(x)$ denotes a multi-layer feed-forward network and $\htheta$
denotes the parameters of the network after training. The network is considered
to be a mapping from a latent input $x\in \reals^{\kdim}$ to an output
$G_\theta(x)\in\reals^\n$ with $\kdim \ll \n$. A closely related objective is to minimize $f_{\theta,  A}(x) = \frac{1}{2} \| A G_\theta(x) - Ay\|^2$ where $A$ is a random matrix.

\cite{hand2017global} study the behavior of the function $f_{\theta_0, A}$ in a compressed sensing framework where $y = G_{\theta_0}(x_0)$ is generated from a random  network with parameters $\theta_0 = (W_1,\ldots, W_d)$ drawn from Gaussian matrix ensembles; thus, the network is not trained. In this setting, it is shown that the surface is very well behaved. In particular, outside of small neighborhoods around $x_0$ and a scalar multiple of $-x_0$, the function $f_{\theta_0, A}(x)$ always has a descent direction.

When the parameters of the network are trained, the landscape of the function $f_\htheta(x)$ can be complicated; it will in general be nonconvex with multiple local optima. Figure~1 illustrates the behavior of the surfaces as they evolve from random networks (left) to fully trained networks (right) for 4-layer networks trained on Fashion MNIST using a variational autoencoder. For each of two target values $y$, three surfaces $x\mapsto -\frac{1}{2} \| G_{\theta_t}(x) - y\|^2$ are shown for different levels of training.

This paper explores the following simple idea. We incrementally optimize a sequence of objective functions $f_{\theta_0}, f_{\theta_1}, \ldots, f_{\theta_T}$ where the parameters $\theta_0, \theta_1, \ldots, \theta_T = \hat \theta$ are obtained using stochastic gradient descent in $\theta$ during training. When initialized with random parameters $\theta_0$, we show that the empirical risk function $ f_{\theta_0}(x) = \frac{1}{2}\|G_{\theta_0}(x) - y\|^2$ is ``nice'' and easy
to optimize with gradient descent. As learning is carried out, we obtain a sequence of generative networks $x \mapsto G_{\theta_t}(x)$ and associated risk functions $f_{\theta_t}(x)$, where $t$ indicates an intermediate stage of stochastic gradient descent during training.  Since the parameters of the network do not change by very much in each step \citep{du2018deep,du2018gradient}, the surface evolves slowly. We initialize $x$ for the current network $G_{\theta_t}(x)$ at the optimum $x^*_{t-1}$ found for the previous network $G_{\theta_{t-1}}(x)$ and then carry out gradient descent to obtain the updated point $x^*_{t} = \argmin_x f_{\theta_t}(x)$.

We call this process {\it surfing} since it rides along the peaks of the evolving (negative) empirical risk function, starting from a smooth surface at the beginning of learning and ending with a wavy nonconvex surface after learning is complete. We formalize this algorithm in a manner that makes it amenable to analysis. First, when $\theta_0$ is initialized so that the weights are random Gaussian matrices, we prove a theorem showing that the surface has a descent direction at each point outside of a small neighborhood. The analysis of \cite{hand2017global} does not directly apply in our case since the target $y$ is an arbitrary test point, and not necessarily generated according to the random network. We then give an analysis that describes how projected gradient descent can be used to proceed from the optimum of one network to the next. Our approach is based on the fact that the ReLU network and squared error objective result in a piecewise quadratic surface. Experiments are run to show how surfing can be used to find the global optimum and for compressed sensing even when direct gradient descent fails, using several experimental setups with networks trained with both VAE and GAN techniques.

\begin{figure*}[t]
  \def\hsk{\hskip-20pt}
\begin{center}
\begin{tabular}{cccc}
\hsk {\small initial network} &
\hsk {\small partially trained network} &
\hsk {\small fully trained network} &
{target $y$} \\
\hsk \includegraphics[width=.31\textwidth]{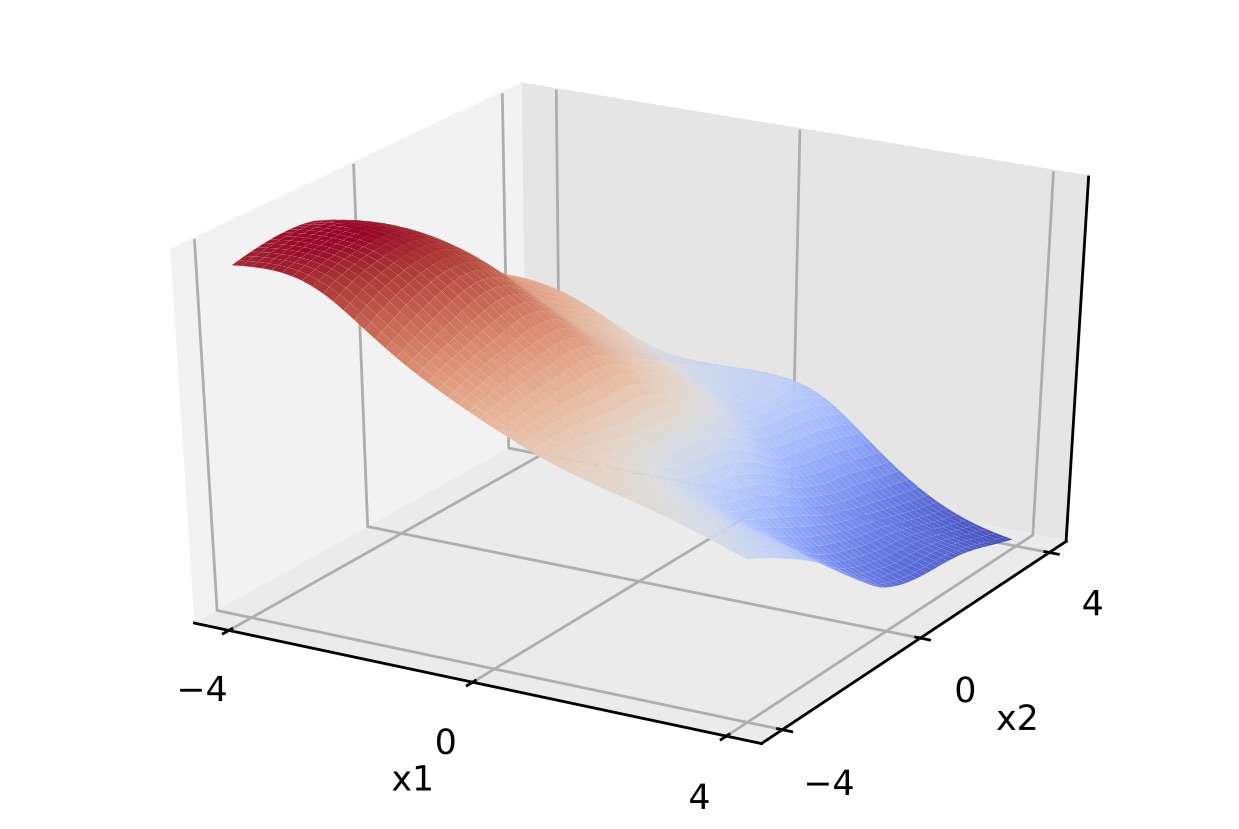} &
\hsk \includegraphics[width=.31\textwidth]{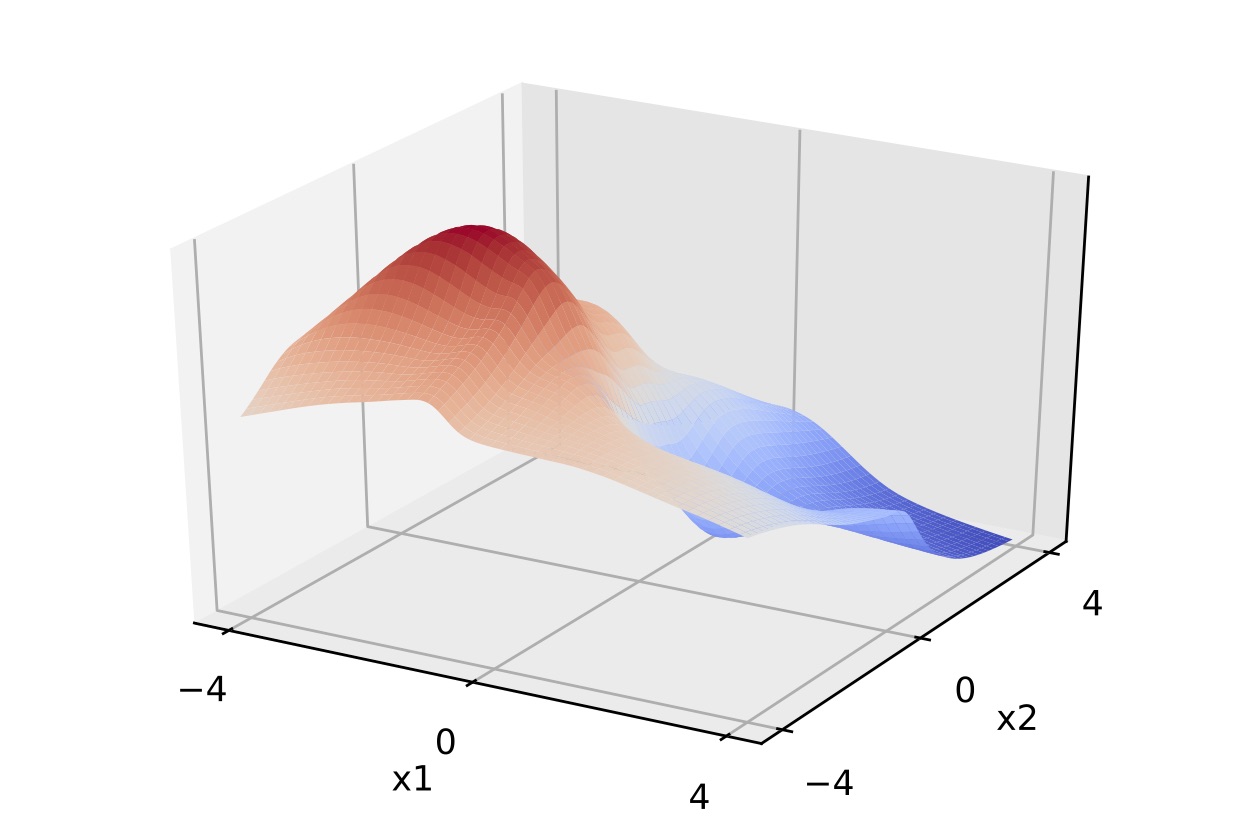} &
\hsk \includegraphics[width=.31\textwidth]{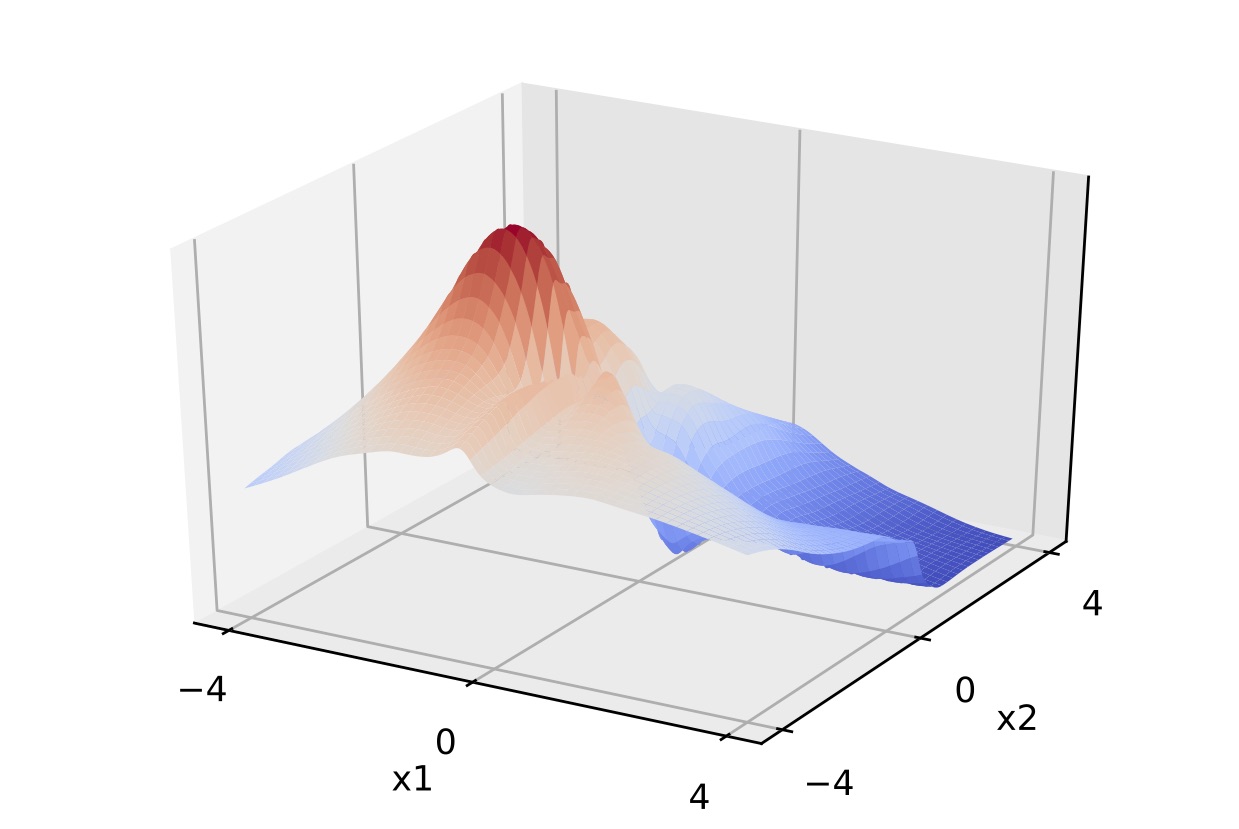} &
\hskip-10pt \raise20pt\hbox{\includegraphics[width=.12\textwidth]{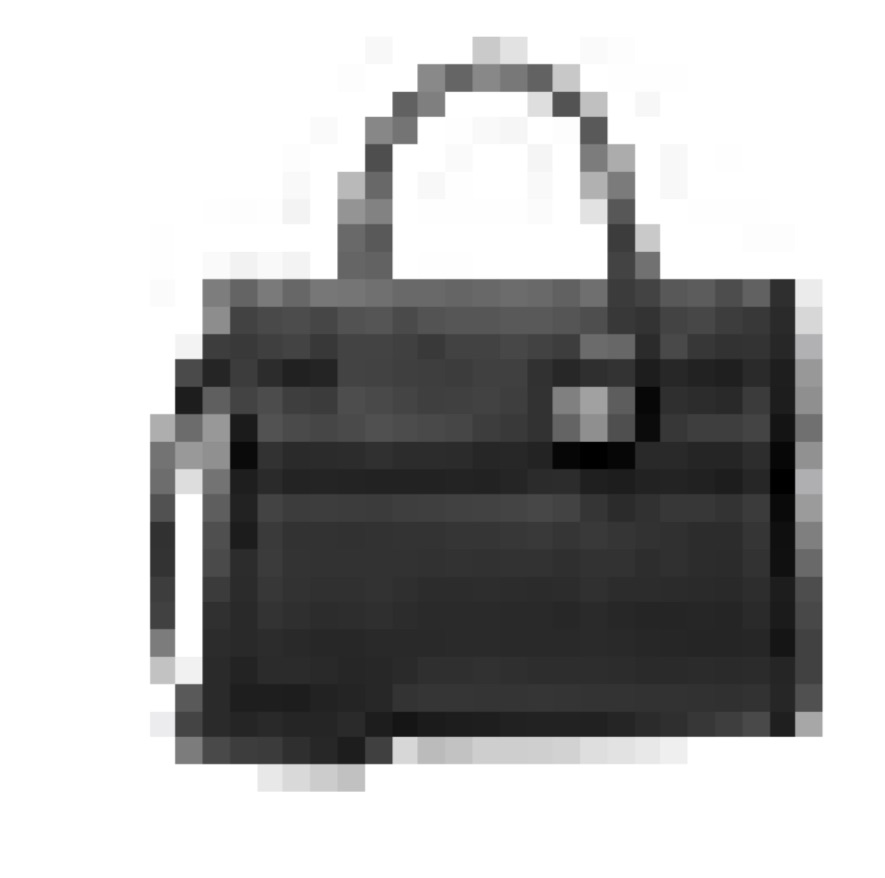}} \\[-1pt]
\hsk \includegraphics[width=.31\textwidth]{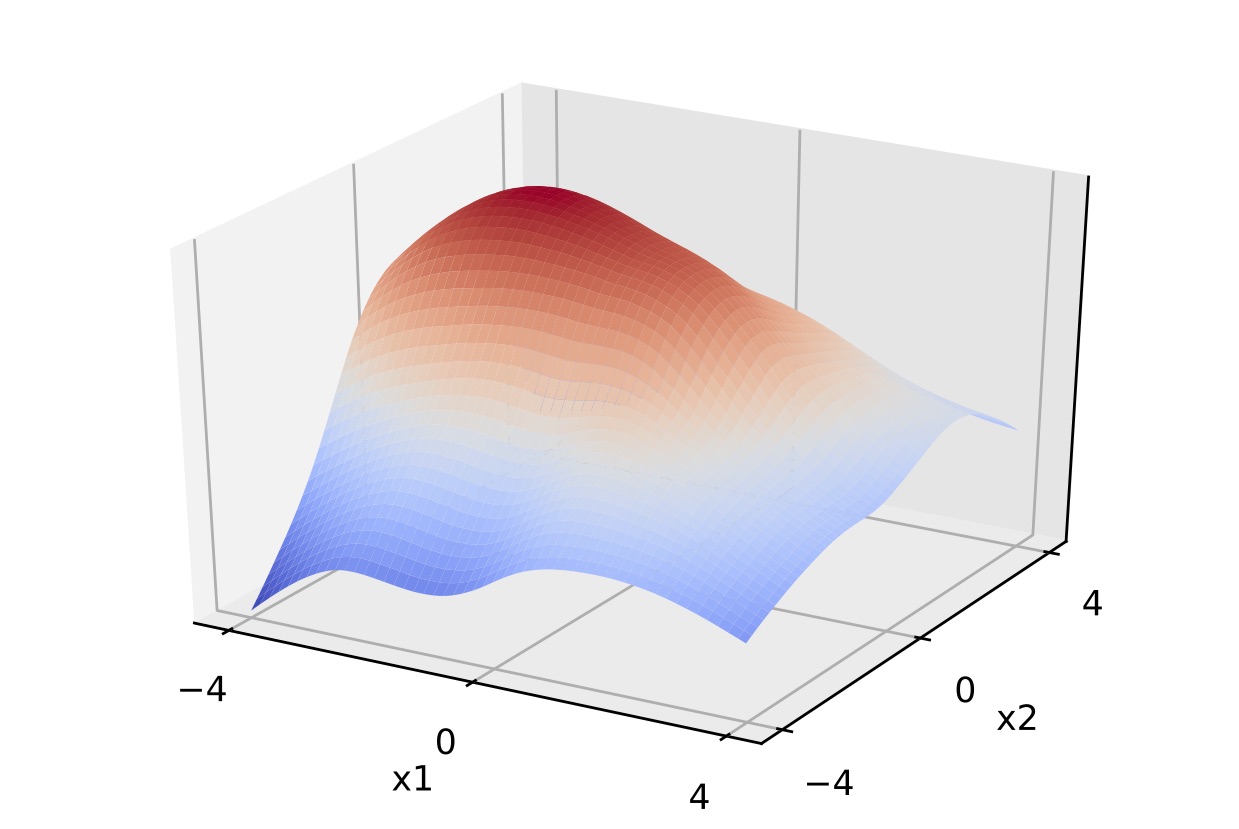} &
\hsk \includegraphics[width=.31\textwidth]{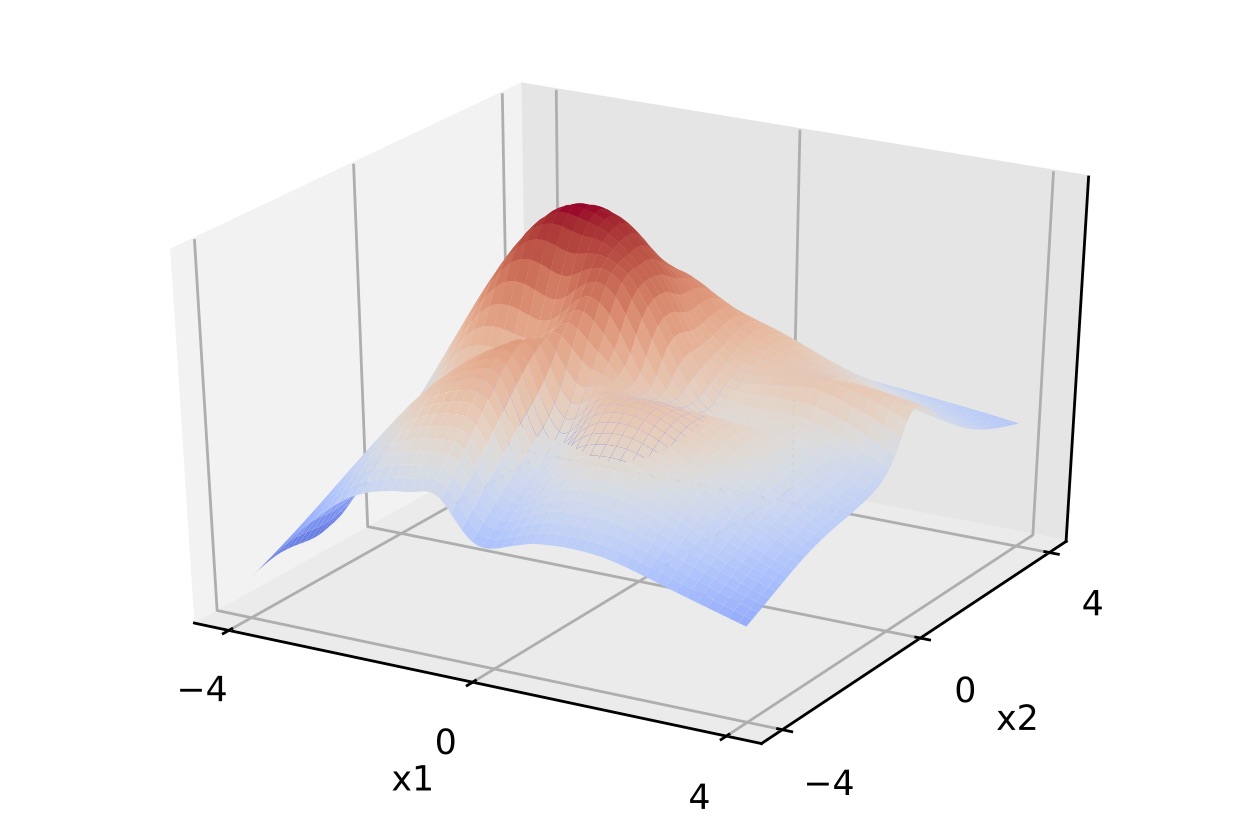} &
\hsk \includegraphics[width=.31\textwidth]{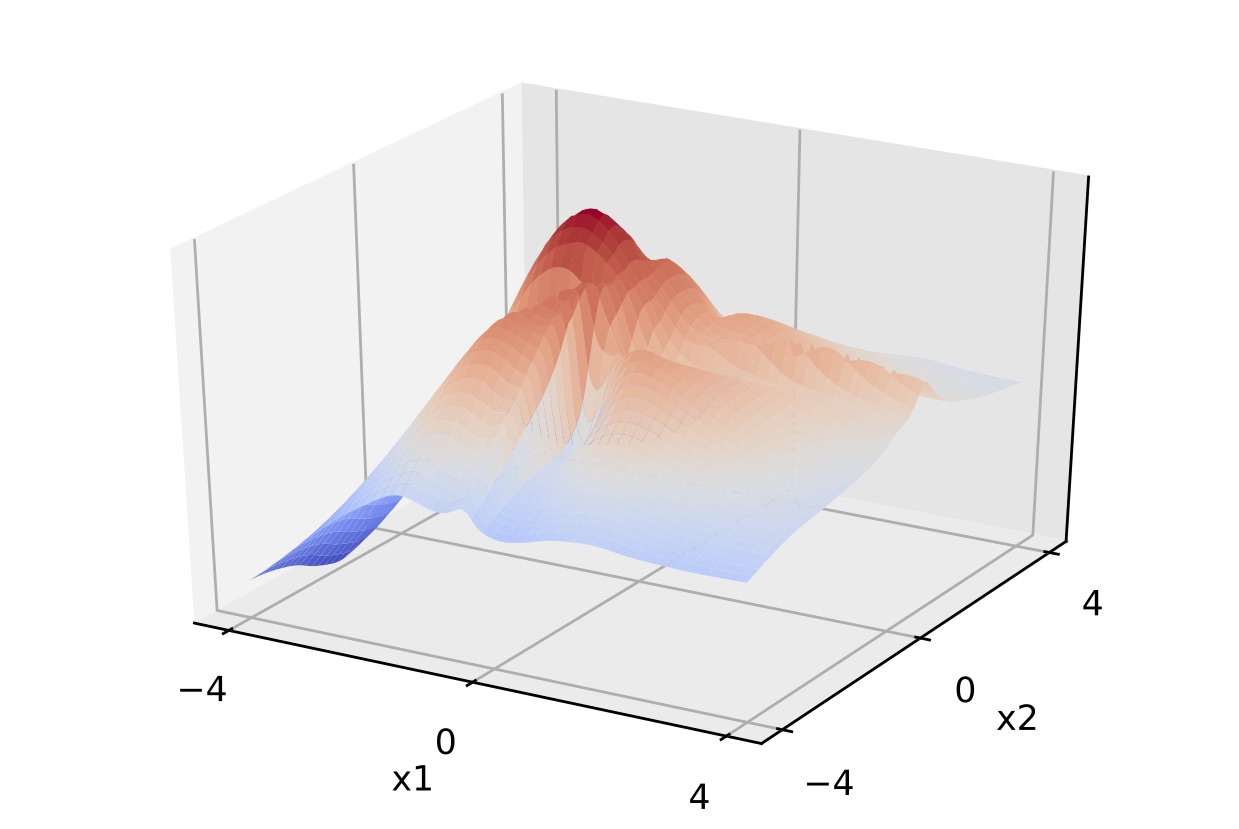} &
\hskip-10pt \raise20pt\hbox{\includegraphics[width=.12\textwidth]{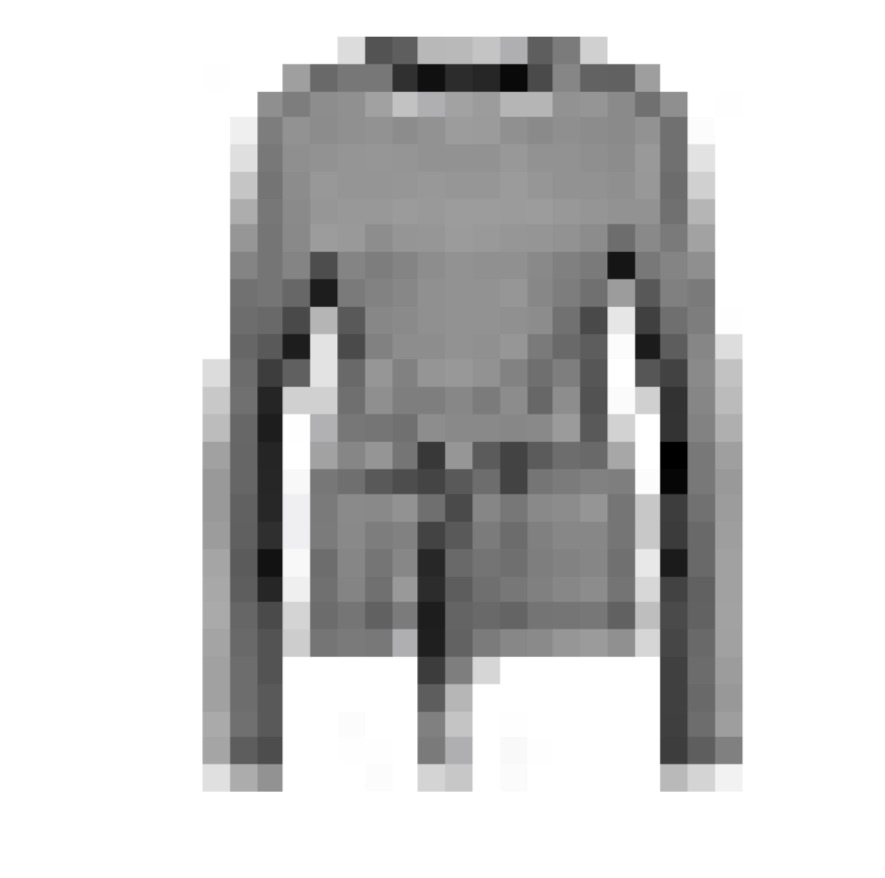}}
\end{tabular}
\end{center}
\caption{Behavior of the surfaces $x\mapsto -{\textstyle \frac{1}{2}} \| G_{\theta_t}(x) - y\|^2$  for two targets $y$ shown for three levels of training,from random networks (left) to fully trained networks (right) on Fashion MNIST data. The network structure
has two fully connected layers and two transposed convolution layers with batch normalization, trained as a VAE.}
\label{figsurf}
\end{figure*}


\def\R{{\mathbb R}}

\section{Background and Previous Results}\label{sec:review}

In this work we treat the problem of approximating an observed vector $y$ in
terms of the output $G_{\hat{\theta}}(x)$ of a trained generative model. Traditional generative processes such as graphical models are statistical models that define a distribution over a sample space. When deep networks are viewed as generative models, the distribution is typically singular, being a deterministic mapping of a low-dimensional latent random vector to a high-dimensional output space. Certain forms of ``reversible deep networks'' allow for the computation of densities and inversion
\citep{nvp,glow,node}.

The variational autoencoder (VAE) approach training a generative (decoder)
network is to model the conditional probability of $x$ given $y$ as Gaussian
with mean $\mu(y)$ and covariance $\Sigma(y)$ assuming that {\it a priori} $x\sim N(0, I_k)$ is Gaussian. The mean  and covariance are treated as the output of a secondary (encoder) neural network. The two networks are trained by maximizing the evidence lower bound (ELBO) with coupled gradient descent algorithms---one for the encoder network, the other for the decoder network $G_{\theta}(x)$ \citep{kingma2013auto}. Whether fitting the networks using a variational or GAN approach \citep{goodfellow2014generative,arjovsky2017wasserstein}, the problem of ``inverting'' the network to obtain
$x^* = \argmin f_{\theta}(x)$ is not addressed by the training procedure.

In the now classical compressed sensing framework
\citep{candes2006stable,donoho2006compressed}, the problem is to reconstruct a
sparse signal after observing multiple linear measurements, possibly with added noise. More recent work has begun to investigate generative deep networks as a replacement for sparsity in compressed sensing.  \cite{bora2017compressed} consider identifying $y=G(x_0)$ from linear measurements $Ay$ by optimizing $f(x) = \frac{1}{2}\|Ay - AG(x)\|^2$.  Since this objective is nonconvex, it is not guaranteed that gradient descent will converge to the true global minimum. However, for certain classes of ReLU networks it is shown that so long as a point
$\hat x$ is found for which $f(\hat x)$ is sufficiently close to zero, then $\|y - G(\hat x)\|$ is also small.
For the case where $y$ does not lie in the image of $G$, an oracle type bound is shown implying that the solution $\hat x$ satisfies
$\|G(\hat x) - y\|^2 \leq C \inf_x \|G(x) - y\|^2 + \delta$ for some small error term $\delta$. The authors observe that in experiments the error seems to converge to zero when $\hat x$ is computed using simple gradient descent; but an analysis of this phenomenon is not provided.

\cite{hand2017global} establish the important result that for a $d$-layer random network and random measurement matrix $A$, the least squares objective has favorable geometry, meaning that outside two small neighborhoods there are no first order stationary points, neither local minima nor saddle points.  We describe their setup and result in some detail, since it provides a springboard for the surfing algorithm. Let $G: \R^k \to \R^n$ be a $d$-layer fully connected feedforward generative neural network, which has the form
$G(x) =  \sigma(W_d...\sigma(W_2 \sigma(W_1x))...)$ where $\sigma$ is the ReLU activation function. The matrix $W_i \in R^{n_i\times n_{i-1}}$ is the set of weights for the $i$th layer and $n_i$ is number of the neurons in this layer with $k=n_0<n_1<...<n_d=n$. If $x_0\in\R^k$ is the input then $AG(x_0)$ is a set of random linear measurements of the signal $y=G(x_0)$. The objective is to minimize
$f_{A,\theta_0}(x) = \frac{1}{2}\big\|AG_{\theta_0}(x) - AG_{\theta_0}(x_0)\big\|^2$
where $\theta_0 = (W_1,\ldots, W_d)$ is the set of weights.

Due to the fact that the nonlinearities $\sigma$ are rectified linear units, $G_{\theta_0}(x)$ is a piecewise linear function. It is convenient to introduce notation that absorbs the activation $\sigma$ into weight matrix $W_i$, denoting
\begin{equation*}
W_{+,x} = \text{diag}(Wx>0)W.
\end{equation*}
For a fixed $W$, the matrix $W_{+,x}$ zeros out the rows of $W$ that do not have a positive dot product with $x$; thus,
$\sigma(Wx) = W_{+,x}x$. We further define $W_{1,+,x} = \text{diag}(W_1x>0)\,W_1$ and
\begin{equation*}
W_{i,+,x} = \text{diag}(W_iW_{i-1,+,x}...W_{1,+,x}x>0)\,W_i.
\end{equation*}
With this notation, we can rewrite the generative network $G_{\theta_0}$ in what looks like a linear form,
\begin{equation*}
G_{\theta_0}(x) = W_{d,+,x}W_{d-1,+,x}...W_{1,+,x}x,
\end{equation*}
noting that each matrix $W_{i,+,x}$ depends on the input $x$. If $f_{A,\theta_0}(x)$ is differentiable at $x$, we can write the gradient as
\begin{equation*}
\nabla f_{A,\theta_0}(x) = \Bigl(\prod_{i=d}^1W_{i,+,x}\Bigr)^T A^TA \Bigl(\prod_{i=d}^1W_{i,+,x}\Bigr)x - \Bigl(\prod_{i=d}^1W_{i,+,x}\Bigr)^T A^TA \Bigl(\prod_{i=d}^1W_{i,+,x_0}\Bigr)x_0.
\end{equation*}
In this expression, one can see intuitively that under the assumption that $A$ and $W_i$ are Gaussian matrices, the gradient $\nabla f_{\theta_0}(x)$ should concentrate around a deterministic vector $v_{x,x_0}$.
\cite{hand2017global} establish sufficient conditions for concentration of the random matrices around deterministic quantities,
so that $v_{x,x_0}$ has norm bounded away from zero if $x$ is sufficiently far from $x_0$ or a scalar multiple of $-x_0$. Their results show that for random networks having a sufficiently expansive number of neurons in each layer, the objective $f_{A,\theta_0}$ has a landscape favorable to gradient descent.

We build on these ideas, showing first that optimizing with respect to $x$ for a random network
and arbitrary signal $y$ can be done with gradient descent. This requires modified proof techniques, since it is no longer assumed that $y=G_{\theta_0}(x_0)$. In fact, $y$ can be arbitrary and we wish to approximate it as $G_{\htheta}(x(y))$ for some $x(y)$. Second, after this initial optimization is carried out, we show how projected gradient descent can be used to track the optimum as the network undergoes a series of small changes. Our results are stated formally in the following section.


\section{Theoretical Results}\label{sec:results}

Suppose we have a sequence of networks
$G_0,G_1,\ldots,G_T$ generated from the training process. For instance, we may take
a network with randomly initialized weights as $G_0$, and record the network after each step of
gradient descent in training; $G_T = G$ is the final trained network.

\renewcommand{\algorithmicrequire}{\textbf{Input:}}
\renewcommand{\algorithmicensure}{\textbf{Output:}}
\setlength{\intextsep}{0pt}
\setlength{\columnsep}{15pt}
\begin{wrapfigure}{r}{0.45\textwidth}
  \begin{minipage}{0.45\textwidth}
  \begin{algorithm}[H]
  \caption{Surfing}
  \begin{algorithmic}[1]
  \Require Sequence of networks $\theta_0,\theta_1,\ldots,\theta_T$
  \State $x_{-1}\leftarrow 0$
  \For{$t = 0$ to $T$}
  \State $x\leftarrow x_{t-1}$
  \Repeat
  \State $x\leftarrow x - \eta\nabla f_{\theta_t}(x)$
  \Until{convergence}
  \State $x_t\leftarrow x$
  \EndFor
  \Ensure $x_T$
  \end{algorithmic}
  \end{algorithm}
  \end{minipage}
  \vskip10pt
\end{wrapfigure}
For a given vector $y \in \R^n$, we wish to minimize the objective
$f(x)=\frac{1}{2}\|AG(x)-Ay\|^2$ with respect to $x$ for the final network $G$, where either $A=I \in \R^{n \times n}$, or
$A \in \R^{m \times n}$ is a measurement matrix with i.i.d.\ $\N(0,1/m)$ entries
in a compressed sensing context. Write
\begin{equation}\label{eq:ft}
f_t(x) = \frac{1}{2}\|AG_t(x) - Ay\|^2, \quad \forall \ t\in[T].
\end{equation}
The idea is that
we first minimize $f_0$, which has a nicer landscape, to obtain the minimizer $x_0$. We then
apply gradient descent on $f_t$ for $t = 1,2,...,T$ successively, starting from
the minimizer $x_{t-1}$ for the previous network.

We provide some theoretical analysis in partial support of this algorithmic
idea. First, we show that at random initialization $G_0$, all critical points of
$f_0(x)$ are localized to a small ball around zero. Second, we show that if
$G_0,\ldots,G_T$ are obtained from a discretization of a
continuous flow, along which the global minimizer of $f_t(x)$ is unique and
Lipschitz-continuous, then a projected-gradient version of surfing
can successively find the minimizers for $G_1,\ldots,G_T$
starting from the minimizer for $G_0$.

We consider expansive feedforward neural networks
$G:\mathbb{R}^k \times \Theta \mapsto \mathbb{R}^n$ given by
\begin{equation*}G(x,\theta)=V\sigma(W_d\ldots \sigma(W_2\sigma(W_1x+b_1)+b_2)\ldots +b_d).\end{equation*}
Here, $d$ is the number of intermediate layers (which we will treat as constant), $\sigma$ is the ReLU activation function
$\sigma(x)=\max(x,0)$ applied entrywise,
and $\theta=(V,W_1,...,W_d,b_1,...,b_d)$ are the network parameters.
The input dimension is $k \equiv n_0$, each intermediate
layer $i \in [d]$ has
weights $W_i \in \R^{n_i\times n_{i-1}}$ and biases $b_i \in\R^{n_i}$, and a linear
transform $V \in \R^{n\times n_d}$ is applied in the final layer.

For our first result, consider fixed $y \in \R^n$
and a random initialization $G_0(x) \equiv G(x,\theta_0)$
where $\theta_0$ has Gaussian entries (independent of $y$).
If the network is sufficiently expansive at
each intermediate layer, then the following shows that
with high probability, all critical points
of $f_0(x)$ belong to a small ball around 0. More concretely, the directional derivative $D_{-x/\|x\|} f_0(x)$ satisfies
\begin{equation}\label{eq:vx}
D_{-x/\|x\|} f_0(x) \equiv \lim_{t \to 0^+} \frac{f_0(x-tx/\|x\|)-f_0(x)}{t}<0.
\end{equation}
Thus $-x/\|x\|$ is a first-order descent direction of the objective
$f_0$ at $x$.

\begin{theorem}\label{thm:init}
Fix $y \in \R^n$.
Let $V$ have $\mathcal{N}(0,1/n)$ entries, let $b_i$ and $W_i$ have
$\mathcal{N}(0,1/n_i)$ entries for each $i \in [d]$, and suppose these
are independent. There exist $d$-dependent constants $C,C',c,\eps_0>0$
such that for any $\eps \in (0,\eps_0)$, if
\begin{enumerate}
\item $n \geq n_d$ and $n_i>C(\eps^{-2}\log \eps^{-1})n_{i-1}\log n_i$ for
all $i \in [d]$, and
\item Either $A=I$ and $m=n$,
or $A \in \R^{m \times n}$ has i.i.d.\ $\N(0,1/m)$ entries
(independent of $V,\{b_i\},\{W_i\}$) where $m \geq Ck(\eps^{-1}\log \eps^{-1})
\log(n_1\ldots n_d)$,
\end{enumerate}
then with probability at least $1-C(e^{-c\eps m}
+n_de^{-c\eps^4 n_{d-1}}+\sum_{i=1}^{d-1} n_i e^{-c\eps^2 n_{i-1}})$,
every $x \in \R^k$ outside the ball $\|x\| \leq C'\eps(1+\|y\|)$
satisfies (\ref{eq:vx}).
\end{theorem}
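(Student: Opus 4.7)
The plan is to verify the descent condition~(\ref{eq:vx}) directly by decomposing the gradient according to the piecewise-linear structure of the network. Using the notation of \cite{hand2017global}, at a differentiability point $x$ the generator decomposes as $G_0(x) = J(x) x + \beta(x)$, where $J(x) = V W_{d,+,x} W_{d-1,+,x} \cdots W_{1,+,x}$ is the Jacobian and $\beta(x)$ collects the propagated bias terms. Then $\nabla f_0(x) = J(x)^\top A^\top A(G_0(x)-y)$, so
\begin{equation*}
-\|x\|\, D_{-x/\|x\|} f_0(x) = \|AJ(x)x\|^2 + \langle AJ(x)x,\, A\beta(x)\rangle - \langle AJ(x)x,\, Ay\rangle,
\end{equation*}
and the same identity governs the one-sided limit at non-differentiability points since $f_0$ is piecewise quadratic and the ray $t \mapsto x - tx/\|x\|$ enters a fixed linear piece near $t = 0^+$. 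It then suffices to lower bound $\|AJ(x)x\|^2$ and upper bound the two cross terms, uniformly in $x$ outside the claimed ball.

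For the leading term, the plan is to invoke the Weight Distribution Condition (WDC) for the Gaussian-ReLU products $W_{d,+,x}\cdots W_{1,+,x}$ and the Range Restricted Isometry Condition (RRIC) for the outer Gaussian matrix $A$ (or the identity when $A=I$), both in the form developed by \cite{hand2017global}. Under the expansivity hypothesis $n_i > C\eps^{-2}(\log\eps^{-1}) n_{i-1}\log n_i$ and the lower bound on $m$, the WDC and RRIC hold simultaneously on the stated high-probability event and give the uniform lower bound $\|AJ(x)x\|^2 \geq c_1 \|x\|^2$ with a $d$-dependent $c_1 > 0$. A parallel layer-by-layer application of the WDC to the bias contribution $\beta(x)$ yields $|\langle AJ(x)x,\, A\beta(x)\rangle| \leq c_3\eps\|x\|$, where the $\eps$ improvement over the naive Cauchy--Schwarz bound comes from the WDC error budget together with the Gaussian fluctuation of the outermost factor $V$ around its mean contribution.

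The main new obstacle, where the argument genuinely departs from \cite{hand2017global} (who rely on $y = G_{\theta_0}(x_0)$ to produce an explicit deterministic limit $v_{x,x_0}$ depending on the angle between $x$ and $x_0$), is controlling $\langle AJ(x)x,\, Ay\rangle$ for \emph{arbitrary} $y$. Here the plan is to exploit the independence of $V$ from $\{W_i, b_i, A, y\}$: factoring $J(x) = V M(x)$ with $M(x) = W_{d,+,x}\cdots W_{1,+,x}$,
\begin{equation*}
\langle AJ(x)x,\, Ay\rangle = \langle M(x)x,\, V^\top A^\top A y\rangle,
\end{equation*}
and, conditionally on $\{W_i, b_i, A\}$, the vector $V^\top A^\top A y \in \R^{n_d}$ is Gaussian with i.i.d.\ entries of variance $\|A^\top A y\|^2/n$, independent of $M(x)$. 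For each fixed activation pattern $(W_{1,+,x},\ldots,W_{d,+,x})$ the inner product is therefore a one-dimensional Gaussian with standard deviation $\|M(x)x\|\,\|A^\top A y\|/\sqrt{n}$; a union bound over the $\mathrm{poly}(n_1,\ldots,n_d)$-many activation patterns produced by ReLU networks on $\R^k$, combined with a net argument within each linear piece, will yield the uniform bound $|\langle AJ(x)x,\, Ay\rangle| \leq c_2\eps\|x\|\|y\|$. The expansivity hypothesis is precisely what ensures $\log(n_1\cdots n_d)/n \lesssim \eps^2$, gaining the crucial factor of $\eps$ over the naive Cauchy--Schwarz estimate. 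Combining the three bounds, for $\|x\| > C'\eps(1+\|y\|)$ with $C'$ chosen large in terms of $c_1,c_2,c_3$, the term $c_1\|x\|^2$ strictly dominates $c_2\eps\|x\|\|y\| + c_3\eps\|x\|$, so the right-hand side of the displayed identity is strictly positive and~(\ref{eq:vx}) holds.
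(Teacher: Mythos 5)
Your overall skeleton coincides with the paper's: expand the gradient using the piecewise-linear form $G_0(x)=J(x)x+\beta(x)$, use the (bias-augmented) WDC of \cite{hand2017global} to show $\|AJ(x)x\|^2\approx 2^{-d}\|x\|^2$ and to kill the bias cross-term via $\|W_{i,+,v}^\top b_{i,+,v}\|\leq\eps$, handle non-differentiable points by a one-sided/limiting argument, and isolate the genuinely new term $\langle AJ(x)x, Ay\rangle$, exploiting the independence of $V$ from $y$. The gap is in your treatment of that new term when $A\neq I$. Conditioning on $\{W_i,b_i,A\}$ and using only the Gaussianity of $V$, the relevant vector is $M(x)^\top V^\top A^\top Ay$, whose scale is governed by $\|A^\top Ay\|/\sqrt{n}$ --- and for $m\ll n$ one has $\|A^\top Ay\|\asymp\sqrt{n/m}\,\|y\|$, not $\asymp\|y\|$, so the per-pattern Gaussian deviation is of order $\|x\|\,\|y\|/\sqrt{m}$, not $\|x\|\,\|y\|/\sqrt{n}$; your claim that expansivity gives $\log(n_1\cdots n_d)/n\lesssim\eps^2$ addresses the wrong denominator. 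To reach deviation $\eps\|x\|\|y\|$ after the union bound over the $\exp(c\,dk\log(n_1\cdots n_d))$ activation patterns you would need $\eps^2 m\gtrsim k\log(n_1\cdots n_d)$, i.e.\ $m\gtrsim \eps^{-2}k\log(n_1\cdots n_d)$, which is strictly stronger than the theorem's hypothesis $m\geq Ck(\eps^{-1}\log\eps^{-1})\log(n_1\cdots n_d)$; under the stated hypothesis your route only yields a bound of order $\sqrt{\eps}\,\|x\|\|y\|$, hence a ball of radius $\sqrt{\eps}\,\|y\|$ rather than $\eps(1+\|y\|)$.

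The paper circumvents this by splitting $A^\top A=I+(A^\top A-I)$: the deviation term is controlled by the range-restricted isometry lemma (Lemma~\ref{lemma:RIP}, i.e.\ Lemma~14 of \cite{hand2017global}) applied to the union of $k$-dimensional subspaces containing the network range and the single vector $y$, which delivers accuracy $\eps$ with failure probability $e^{-c\eps m}$ --- linear in $\eps$ in the exponent, which is exactly what the assumed $m$ affords. Only after this reduction is the Gaussianity of $V$ used, and then against $V^\top y$ alone (Lemma~\ref{lma:2}): one shows $\|W_{d,+,v}^\top V^\top y\|\leq C\eps\|y\|$ uniformly over all $v\in\R^{n_{d-1}}$ via an $\eps^2$-net of the sphere and the WDC at accuracy $\eps^2$ (the source of the $n_d e^{-c\eps^4 n_{d-1}}$ term), exploiting $n\geq n_d\gg \eps^{-2}n_{d-1}$, and then uses $\|W_{i,x}\|\leq 2$ for the remaining layers. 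For the case $A=I$ your pattern-counting plus conditional-Gaussian argument does work (there the per-pattern scale really is $\|y\|/\sqrt{n}$ and expansivity makes $k\log(n_1\cdots n_d)\lesssim \eps^2 n$), and it is a reasonable alternative to Lemma~\ref{lma:2}; but to recover the theorem as stated in the compressed-sensing case you need the $A^\top A=I+(A^\top A-I)$ splitting (or some other mechanism with a linear-in-$\eps$ exponent in $m$), which is the missing ingredient in your proposal. A smaller imprecision: the $\eps$-gain on the bias cross-term comes from the WDC for the augmented matrices $(W_i\;\,b_i)$, i.e.\ $\|W_{i,+,v}^\top b_{i,+,v}\|\leq\eps$, not from fluctuations of $V$; your layer-by-layer plan is compatible with this but should invoke that extension explicitly.
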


We defer the proof to Section \ref{sec:proof}.
Note that if instead $G_0$ were correlated with $y$, say $y=G_0(x_*)$ for some
input $x_*$ with $\|x_*\| \asymp 1$,
then $x_*$ would be a global minimizer of $f_0(x)$, and
we would have $\|y\| \asymp \|x_d\| \asymp \ldots \asymp \|x_1\| \asymp
\|x_*\| \asymp 1$ in the above network where $x_i \in \R^{n_i}$ is the output
of the $i^\text{th}$ layer. The theorem shows that for a random
initialization of $G_0$ which is independent of $y$, the minimizer is
instead localized to a ball around 0
which is smaller in radius by the factor $\eps$.

For our second result, consider a network flow
\begin{equation*}G^s(x) \equiv G(x,\theta(s))\end{equation*}
for $s \in [0,S]$, where
$\theta(s)=(V(s),W_1(s),b_1(s),\ldots,W_d(s),b_d(s))$
evolve continuously in a time parameter $s$.
As a model for network training, we assume that $G_0,\ldots,G_T$ are
obtained by discrete sampling from this flow via $G_t=G^{\delta t}$,
corresponding to $s \equiv \delta t$ for a small time discretization step
$\delta$.

We assume boundedness of the weights and uniqueness and Lipschitz-continuity of
the global minimizer along this flow.
\begin{assumption}
\label{assum:cont}
There are constants $M,L<\infty$ such that
\begin{enumerate}
\item For every $i \in [d]$ and $s \in [0,S]$, $$\|W_i(s)\| \leq M.$$
\item The global minimizer $x_*(s)=\argmin_x f(x,\theta(s))$ is unique and satisfies
\begin{equation*}\|x_*(s)-x_*(s')\| \leq L|s-s'|\end{equation*}
  where $f(x,\theta(s))=\frac{1}{2}\|AG(x,\theta(s))-Ay\|^2$.
\end{enumerate}
\end{assumption}

Fixing $\theta$, the function
$G(x,\theta)$ is continuous and piecewise-linear in $x$.
For each $x \in \R^k$, there is at least one linear piece
$P_0$ (a polytope in $\R^k$) of this function
that contains $x$. For a slack parameter $\tau>0$, consider the rows given by
\begin{equation*}
  S(x,\theta,\tau)=\{(i,j):|w_{i,j}^\top x_{i-1}+b_{i,j}| \leq \tau\},
\end{equation*}
where
\begin{equation*}x_{i-1}=\sigma(W_{i-1} \ldots \sigma(W_1x+b_1) \ldots+b_{i-1})\end{equation*}
is the output of the $(i-1)^\text{th}$ layer for this input $x$, and
$v_j^\top$, $w_{i,j}^\top$, and $b_{i,j}$
are respectively the $j^\text{th}$ row of $V$, the $j^\text{th}$ row of $W_i$ and the $j^\text{th}$ entry of
$b_i$ in $\theta$. Define
\begin{equation*}\mathcal{P}(x,\theta,\tau)=\{P_0,P_1,\ldots,P_G\}\end{equation*}
as the set of all linear pieces $P_g$ whose activation patterns differ from
$P_0$ only in rows belonging to $S(x,\theta,\tau)$. That is, for every $x' \in P_g \in
\mathcal{P}(x,\theta,\tau)$ and $(i,j) \notin S(x,\theta,\tau)$, we have
\begin{equation*}\sign(w_{i,j}^\top x_{i-1}'+b_{i,j})=\sign(w_{i,j}^\top x_{i-1}+b_{i,j})\end{equation*}
where $x_{i-1}'$ is the output of the $(i-1)^\text{th}$ layer for input $x'$.

With this definition, we consider a stylized
projected-gradient surfing procedure in Algorithm \ref{alg:projsurf}, where
$\operatorname{Proj}_P$ is the orthogonal projection onto the polytope $P$.

\vskip20pt
\begin{algorithm}
\caption{Projected-gradient Surfing}\label{alg:projsurf}
\begin{algorithmic}[1]
  \Require Network flow
  $\{G(\cdot,\theta(s)):s \in [0,S]\}$, parameters
  $\delta,\tau,\eta>0$.
\State Initialize $x_0=\argmin_x f(x,\theta(0))$.
\For{$t=1,\ldots,T$}
  \For{each linear piece $P_g \in \mathcal{P}(x_{t-1},\theta(\delta t),\tau)$}
    \State $x \leftarrow x_{t-1}$
    \Repeat
      \State $x \leftarrow \operatorname{Proj}_{P_g}(x-\eta \nabla
f(x,\theta(\delta t)))$
    \Until{convergence}
    \State $x_t^{(g)} \leftarrow x$
  \EndFor
  \State $x_t \leftarrow x_t^{(g)}$ for $g \in \{0,\ldots,G\}$ that achieves
  the minimum value of $f(x_t^{(g)},\theta(\delta t))$.
\EndFor
\Ensure $x_T$
\end{algorithmic}
\end{algorithm}
\vskip10pt

The complexity of this algorithm depends on the number of pieces $G$
to be optimized over in each step. We expect this to be small in practice when
the slack parameter
$\tau$ is chosen sufficiently small.

The following shows that for any $\tau>0$, there is a
sufficiently fine time discretization $\delta$ depending on $\tau,M,L$ such that
Algorithm \ref{alg:projsurf} tracks the global minimizer.
In particular, for the final objective $f_T(x)=f(x,\theta(\delta T))$
corresponding to the network $G_T$, the output $x_T$ is the global
minimizer of $f_T(x)$.

\begin{theorem}
\label{thm:surf}
Suppose Assumption \ref{assum:cont} holds. For any $\tau>0$,
if $\delta<\tau/(L\max(M,1)^{d+1})$ and $x_0=\argmin_x
f(x,\theta(0))$, then the iterates $x_t$ in Algorithm \ref{alg:projsurf}
are given by $x_t=\argmin_x f(x,\theta(\delta t))$ for each $t=1,\ldots,T$.
\end{theorem}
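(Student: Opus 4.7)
The plan is to argue by induction on $t$, showing that the invariant $x_t = \argmin_x f(x,\theta(\delta t))$ is maintained at every step. The base case $t=0$ holds by construction, so suppose $x_{t-1}=\argmin_x f(x,\theta(\delta(t-1)))$, and write $x_* = \argmin_x f(x,\theta(\delta t))$ for the new minimizer we want to reach. By the Lipschitz assumption on $x_*(\cdot)$, the displacement $\|x_* - x_{t-1}\| \leq L\delta$ is small. I would then decompose the proof into two parts: first, showing that $x_*$ lies on some linear piece $P \in \mathcal{P}(x_{t-1},\theta(\delta t),\tau)$ considered by the algorithm; and second, showing that the inner projected-gradient loop on that piece $P$ recovers $x_*$ exactly, which combined with the outer min-over-pieces step yields $x_t=x_*$.

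For the first part, the key estimate is a layerwise Lipschitz bound. With parameters $\theta(\delta t)$ fixed, let $x_{i-1}(u)$ denote the output of layer $i-1$ on input $u$. Because each layer map $u \mapsto \sigma(W_i u + b_i)$ is $\|W_i\|$-Lipschitz and $\|W_i(\delta t)\| \leq M$ by Assumption 3.1, iterating gives $\|x_{i-1}(x_*)-x_{i-1}(x_{t-1})\| \leq M^{i-1}\|x_*-x_{t-1}\| \leq M^{i-1} L\delta$, hence
\begin{equation*}
|w_{i,j}^\top x_{i-1}(x_*) - w_{i,j}^\top x_{i-1}(x_{t-1})| \leq \|w_{i,j}\| \cdot M^{i-1}L\delta \leq M^i L\delta \leq \max(M,1)^{d+1} L\delta < \tau.
\end{equation*}
If some activation $(i,j)$ has differing signs at $x_{t-1}$ and $x_*$, then $w_{i,j}^\top x_{i-1}+b_{i,j}$ crosses zero between these two inputs, and the display above forces $|w_{i,j}^\top x_{i-1}(x_{t-1})+b_{i,j}| \leq \tau$, i.e.\ $(i,j) \in S(x_{t-1},\theta(\delta t),\tau)$. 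Therefore the piece $P$ of $G(\cdot,\theta(\delta t))$ containing $x_*$ differs from the piece through $x_{t-1}$ only in rows of $S(x_{t-1},\theta(\delta t),\tau)$, placing $P \in \mathcal{P}(x_{t-1},\theta(\delta t),\tau)$.

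For the second part, on any linear piece $P_g$ the map $x \mapsto G(x,\theta(\delta t))$ is affine, so $f(\cdot,\theta(\delta t))$ restricted to $P_g$ is a convex quadratic. Projected gradient descent on a convex quadratic over a polytope converges to the constrained minimizer, so the inner loop returns $x_t^{(g)}=\argmin_{x \in P_g} f(x,\theta(\delta t))$. When $P_g=P$, the unconstrained global minimizer $x_*$ already lies in $P$, so $x_t^{(P)}=x_*$ achieves the value $f(x_*,\theta(\delta t))$; this is the global minimum of $f(\cdot,\theta(\delta t))$ and is therefore no larger than $f(x_t^{(g)},\theta(\delta t))$ for any other $g$. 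Hence the outer min picks $x_t=x_*$, closing the induction.

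The main obstacle is the layerwise Lipschitz bookkeeping in the first part: one has to verify that the relevant comparison is between outputs computed with the \emph{same} parameters $\theta(\delta t)$ at two different inputs (so that only the factor $\|x_*-x_{t-1}\|$ enters, not any change in $\theta$), and that the cumulative factor $M^i$ never exceeds the budget $\max(M,1)^{d+1}$ over all layers including the bound $\|w_{i,j}\|\le M$. Once this chain-of-Lipschitz estimate is in place, everything else reduces to the standard fact that projected gradient descent minimizes a convex quadratic over a polytope.
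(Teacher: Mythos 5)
Your proposal is correct and follows essentially the same route as the paper's proof: induction on $t$, the layerwise Lipschitz bound $\|x_{i-1}(x_*)-x_{i-1}(x_{t-1})\| \leq M^{i-1}L\delta$ combined with $\|w_{i,j}\| \leq M$ to show no sign can flip outside $S(x_{t-1},\theta(\delta t),\tau)$, hence $x_*(\delta t)$ lies in a piece of $\mathcal{P}(x_{t-1},\theta(\delta t),\tau)$, and then convergence of projected gradient descent for the quadratic restricted to that piece. The only cosmetic difference is your cleaner indexing (bounding by $M^i$ rather than the paper's $M^{i+1}$), which still fits within the $\max(M,1)^{d+1}$ budget, so the argument matches the paper.
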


\begin{proof}
For any fixed $\theta$, let $x,x' \in \R^k$ be two inputs to $G(x,\theta)$.
If $x_i,x_i'$ are the corresponding outputs of the $i^\text{th}$ layer,
using the assumption $\|W_i\| \leq M$ and the fact that the ReLU activation $\sigma$ is
1-Lipschitz, we have
\begin{align*}
\|x_i-x_i'\|&=\|\sigma(W_ix_{i-1}+b_i)-\sigma(W_ix_{i-1}'+b_i)\|\\
&\leq \|(W_ix_{i-1}+b_i)-(W_ix_{i-1}'+b_i)\|\\
&\leq M\|x_{i-1}-x_{i-1}'\| \leq \ldots \leq M^i\|x-x'\|.
\end{align*}

Let $x_*(s)=\argmin_x f(x,\theta(s))$. By assumption,
$\|x_*(s-\delta)-x_*(s)\| \leq L\delta$. For the network with parameter
$\theta(s)$ at time $s$, let $x_{*,i}(s)$ and $x_{*,i}(s-\delta)$ be the outputs
at the $i^\text{th}$ layer corresponding to inputs $x_*(s)$ and
$x_*(s-\delta)$. Then for any $i \in [d]$ and $j \in [n_i]$, the above yields
\begin{align*}
|(w_{i,j}(s)^\top x_{*,i}(s-\delta)+b_{i,j})&-
(w_{i,j}(s)^\top x_{*,i}(s)+b_{i,j})|
\leq \|w_{i,j}(s)\|\|x_{*,i}(s-\delta)-x_{*,i}(s)\|\\
&\leq M \cdot M^i \|x_*(s-\delta)-x_*(s)\| \leq M^{i+1}L\delta.
\end{align*}
For $\delta<\tau/(L\max(M,1)^{d+1})$, this implies that for every $(i,j)$
where $|w_{i,j}(s)^\top x_{*,i}(s-\delta)+b_{i,j}| \geq \tau$, we have
\begin{equation*}\sign(w_{i,j}(s)^\top x_{*,i}(s-\delta)+b_{i,j})=
\sign(w_{i,j}(s)^\top x_{*,i}(s)+b_{i,j}).\end{equation*}
That is, $x_*(s) \in P_g$ for some $P_g \in
\mathcal{P}(x_*(s-\delta),\theta(s),\tau)$.

Assuming that $x_{t-1}=x_*(\delta(t-1))$, this implies that
the next global minimizer $x_*(\delta t)$ belongs to some
$P_g \in \mathcal{P}(x_{t-1},\theta(\delta t),\tau)$. Since $f(x,\theta(\delta
t))$ is quadratic on $P_g$, projected gradient
descent over $P_g$ in Algorithm \ref{alg:projsurf} converges to
$x_*(\delta t)$, and hence Algorithm \ref{alg:projsurf} yields
$x_t=x_*(\delta t)$. The result then follows from induction on $t$.
\end{proof}


\def\R{{\mathbb R}}

\section{Experiments}
\label{sec:experiments}

\afterpage{
\begin{table}[t!]
\begin{center}
  \vskip20pt
\begin{tabular}{|l|ccc|ccc|}
  \hline
Model           & \multicolumn{3}{c|}{VAE} & \multicolumn{3}{c|}{DCGAN} \\
\hline
Input dimension & 5       & 10    & 20    & 5       & 10      & 20     \\
Regular Adam    & 99.3    & 100   & 100   & 50.3    & 69.3    & 67.7   \\
Surfing         & 99.3     & 100   & 100   & 91.3    & 96.7    & 99.3    \\
\hline
\end{tabular}
\vskip10pt
\begin{tabular}{|l|ccc|ccc|}
  \hline
Model           & \multicolumn{3}{c|}{WGAN} & \multicolumn{3}{c|}{WGAN-GP} \\
\hline
Input dimension & 5       & 10    & 20    & 5       & 10      & 20     \\
Regular Adam    & 48.3    & 75.0  & 91.0  & 46.7    & 63.7    & 69.7   \\
Surfing         & 86.7    & 99.3  & 100   & 92.0    & 97.7    & 99.0 \\
\hline
\end{tabular}
\end{center}
\vskip5pt
\caption{Surfing compared against direct gradient descent over final trained network
for different training procedures. Shown are percentages of solutions $\hat x_T$ satisfying $\|\hat x_T - x_*\| < 0.01$.}
\label{table:prop}
\end{table}
\begin{figure}[h!]
  \begin{center}
  \begin{tabular}{ccc}
  \includegraphics[width=.3\textwidth]{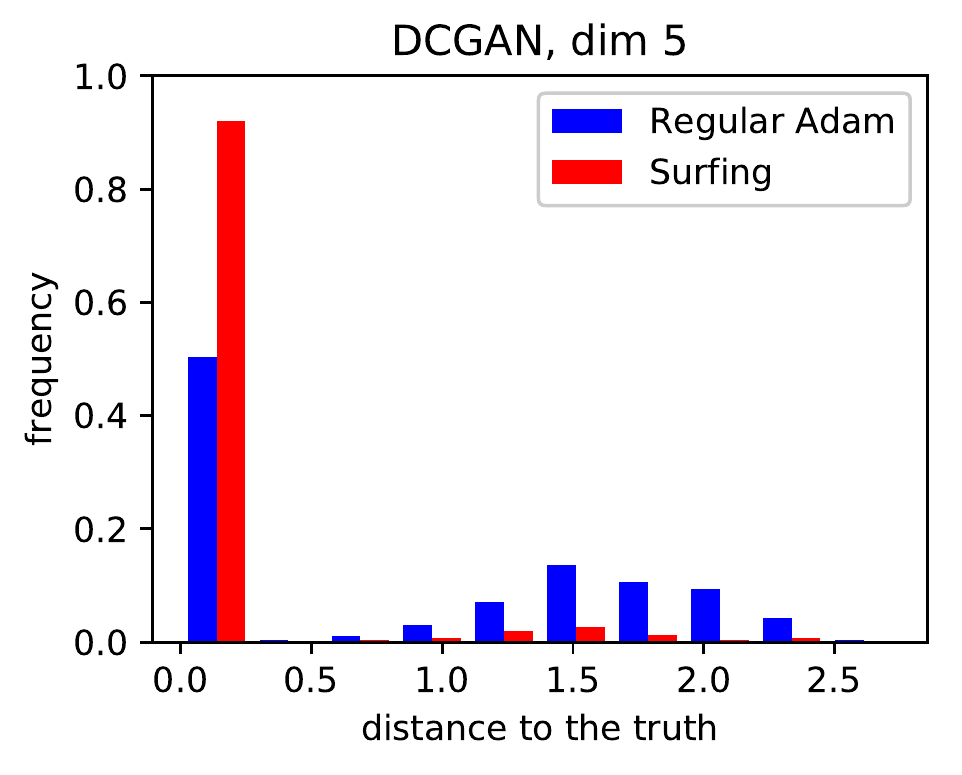} &
  \includegraphics[width=.3\textwidth]{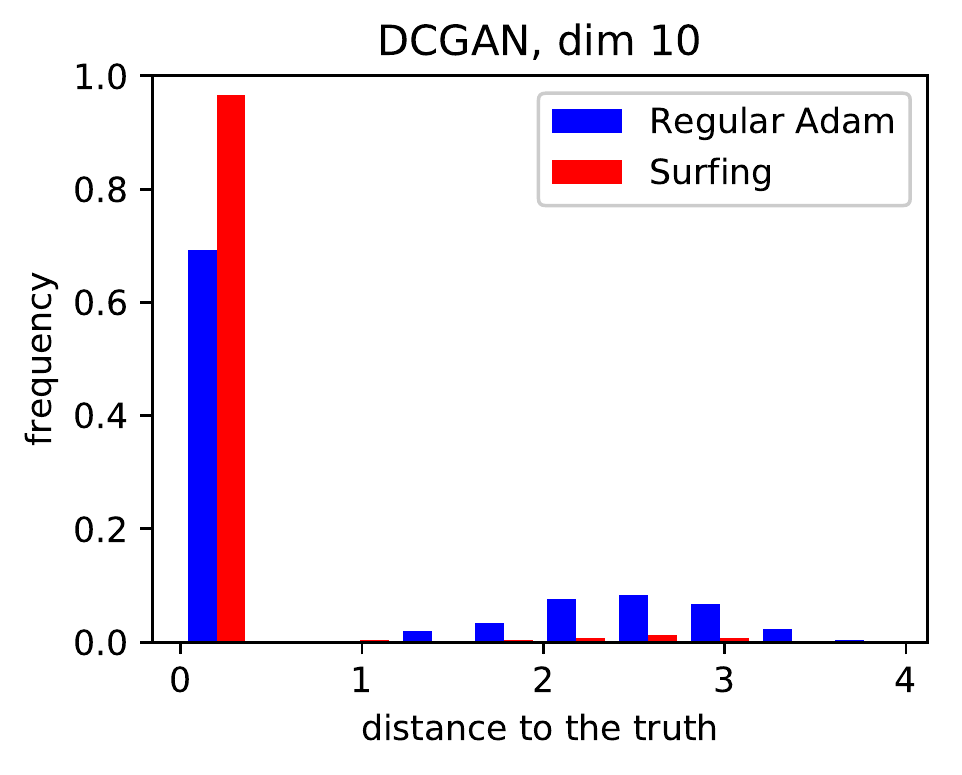} &
  \includegraphics[width=.3\textwidth]{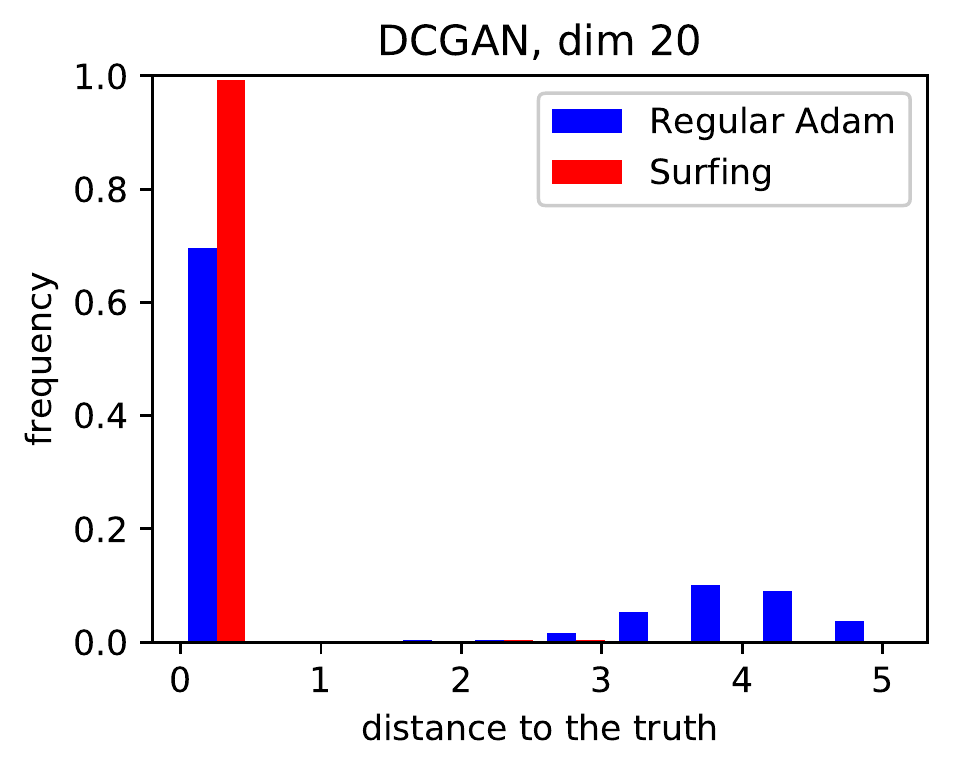} \\[-10pt]
  \end{tabular}
  \end{center}
  \caption{Distribution of distance between solution $\hat x_T$ and the truth $x_*$ for DCGAN trained models, comparing surfing (red) to regular gradient descent (blue) over the final network. Both procedures use Adam in their gradient descent computations. The results indicate that direct descent often succeeds, but can also converge to a point that is far from the optimum. By moving along the optimum of the evolving surface, surfing is able to move closer to the optimum in these cases.}
  \vskip18pt
    \label{fig:dist-WGAN}
  \end{figure}
}

We present experiments to illustrate the performance of surfing over a sequence of networks during training compared with gradient descent over the final trained network. We mainly use the Fashion-MNIST dataset\footnote{https://github.com/zalandoresearch/fashion-mnist} to carry out the simulations, which is similar
to MNIST in many characteristics, but is more difficult to train. We build multiple generative models, trained using VAE \citep{kingma2013auto}, DCGAN \citep{radford2015unsupervised}, WGAN \citep{arjovsky2017wasserstein} and WGAN-GP \citep{gulrajani2017improved}. The structure of the generator/decoder networks that we use are the same as those reported by  \cite{chen2016infogan}; they include two fully connected layers and two transposed convolution layers with batch normalization after each layer \citep{ioffe2015batch}. We use the simple surfing algorithm in these experiments, rather than the projected-gradient algorithm proposed for theoretical analysis. Note also that the network architectures do not precisely match the expansive relu networks used in our analysis. Instead, we experiment with architectures and training procedures that are meant to better reflect the current state of the art.

We first consider the problem of minimizing the objective $f(x) =
\frac{1}{2}\|G(x) - G(x_*)\|^2$ and recovering the image generated from a
trained network $G(x) = G_{\theta_T}(x)$ with input $x_*$. We run surfing by
taking a sequence of parameters $\theta_0, \theta_1,...,\theta_T$, where
$\theta_0$ are the initial random parameters and the intermediate $\theta_t$'s
are taken every 40 training steps. In order to improve convergence speed, we use
Adam \citep{kingma2014adam} to carry out gradient descent in $x$ during each
surfing step. We also use Adam when optimizing over $x$ in only  the final network. For each network training condition we apply surfing and regular Adam for 300 trials, where in each trial a randomly generated $x_*$ and initial point $x_{init}$ are chosen uniformly from the hypercube $[-1,1]^k$. Table \ref{table:prop} shows the percentage of trials where the solutions $\hat x_T$ satisfy $\|\hat x_T - x_*\| < 0.01$ for different models, over three different input dimensions $k$. We also provide the distributions of $\|\hat x_T - x_*\|$ under each setting. Figure \ref{fig:dist-WGAN} shows the results for DCGAN.

We next consider the compressed sensing problem with objective $f(x) = \frac{1}{2}\|AG(x) - AG(x_*)\|^2$ where $A\in\R^{m\times n}$ is the Gaussian measurement matrix. We carry out 200 trials for each choice of number of measurements $m$. The parameters $\theta_t$ for surfing are taken every 100 training steps. As before, we record the proportion of the solutions that are close to the truth $x_*$ according to $\|\hat x_T - x_*\| < 0.01$. Figure \ref{fig:prop} shows the results for DCGAN and WGAN trained networks with input dimension $k=20$.

Lastly, we consider the objective $f(x) = \frac{1}{2}\|AG(x) - Ay\|^2$, where $y$ is a real image from the hold-out test data.
This can be thought of as a rate-distortion setting, where the error varies as a function of the number of measurements used.
We carry out the same experiments as before and compute the average per-pixel reconstruction error $\sqrt{\frac{1}{n}\|G(\hat x_T) - y\|^2}$ as in \cite{bora2017compressed}. Figure \ref{fig:reconstr} shows the distributions of the reconstruction error as the number of measurements $m$ varies.

\begin{figure}[ht!]
  \begin{center}
  \begin{tabular}{cc}
  \includegraphics[width=.40\textwidth]{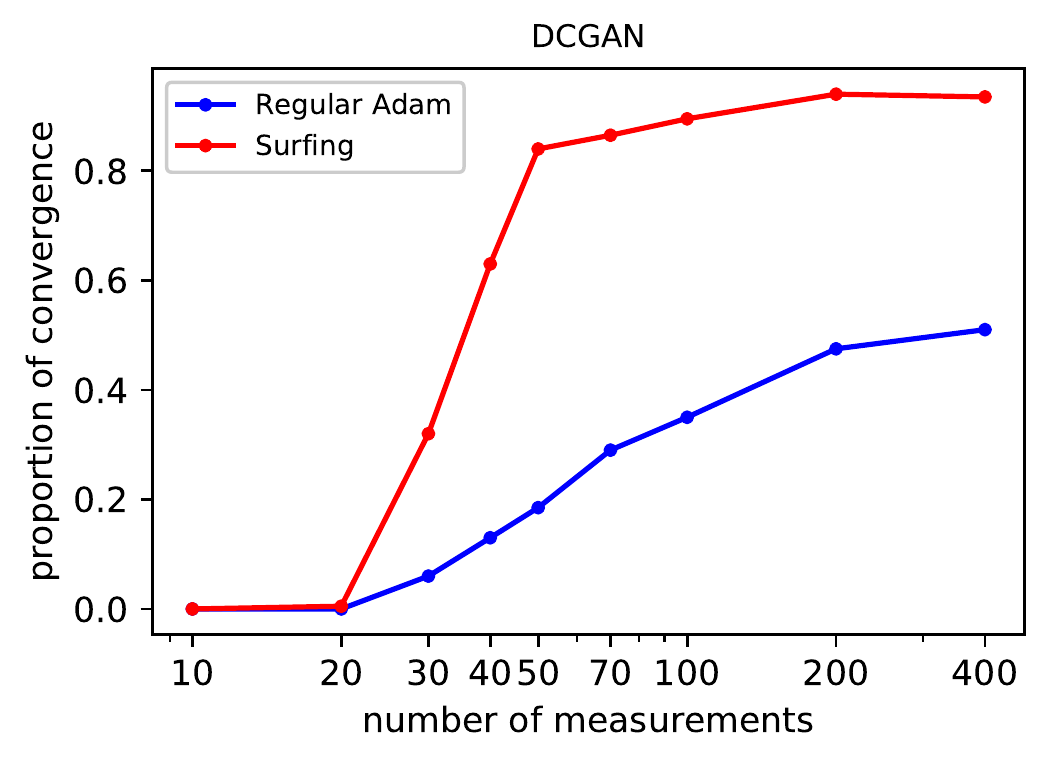} &
  \includegraphics[width=.40\textwidth]{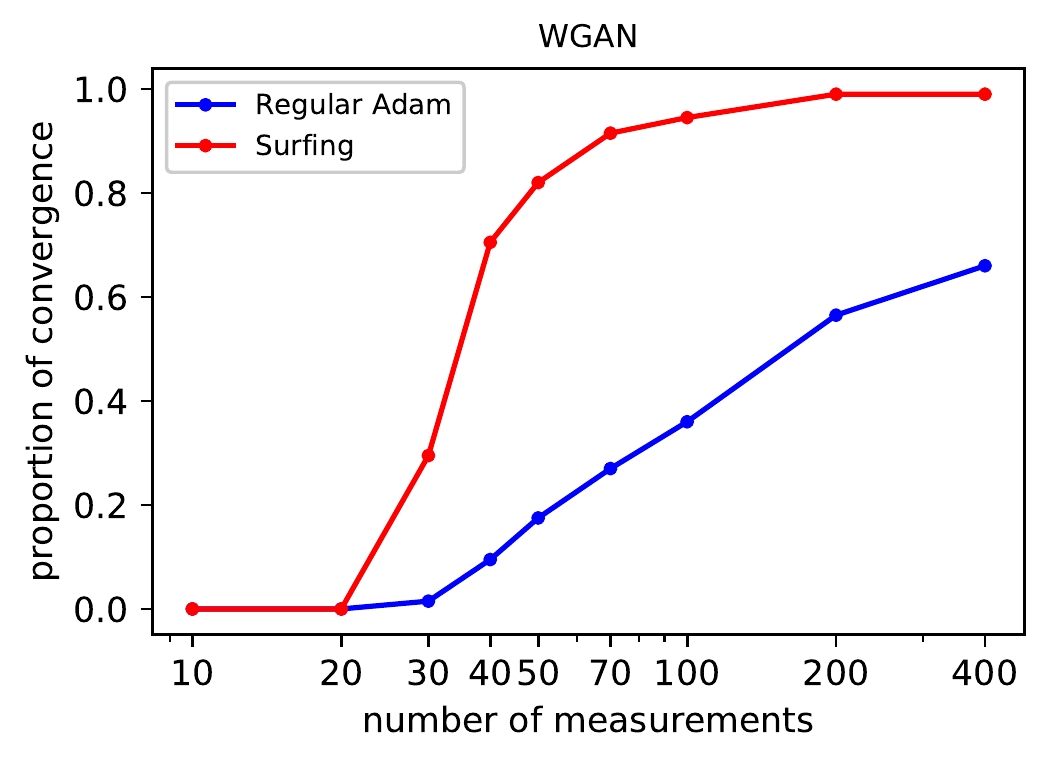} \\[-10pt]
  \end{tabular}
  \end{center}
  \caption{Compressed sensing setting for exact recovery. As a function of the number
  of random measurements $m$, the lines show the proportion of times surfing (red) and
  regular gradient descent with Adam (blue) are able to recover the true signal $y = G(x)$, using DCGAN and WGAN.}
  \label{fig:prop}
\vskip10pt
\begin{center}
\begin{tabular}{cc}
\includegraphics[width=.40\textwidth]{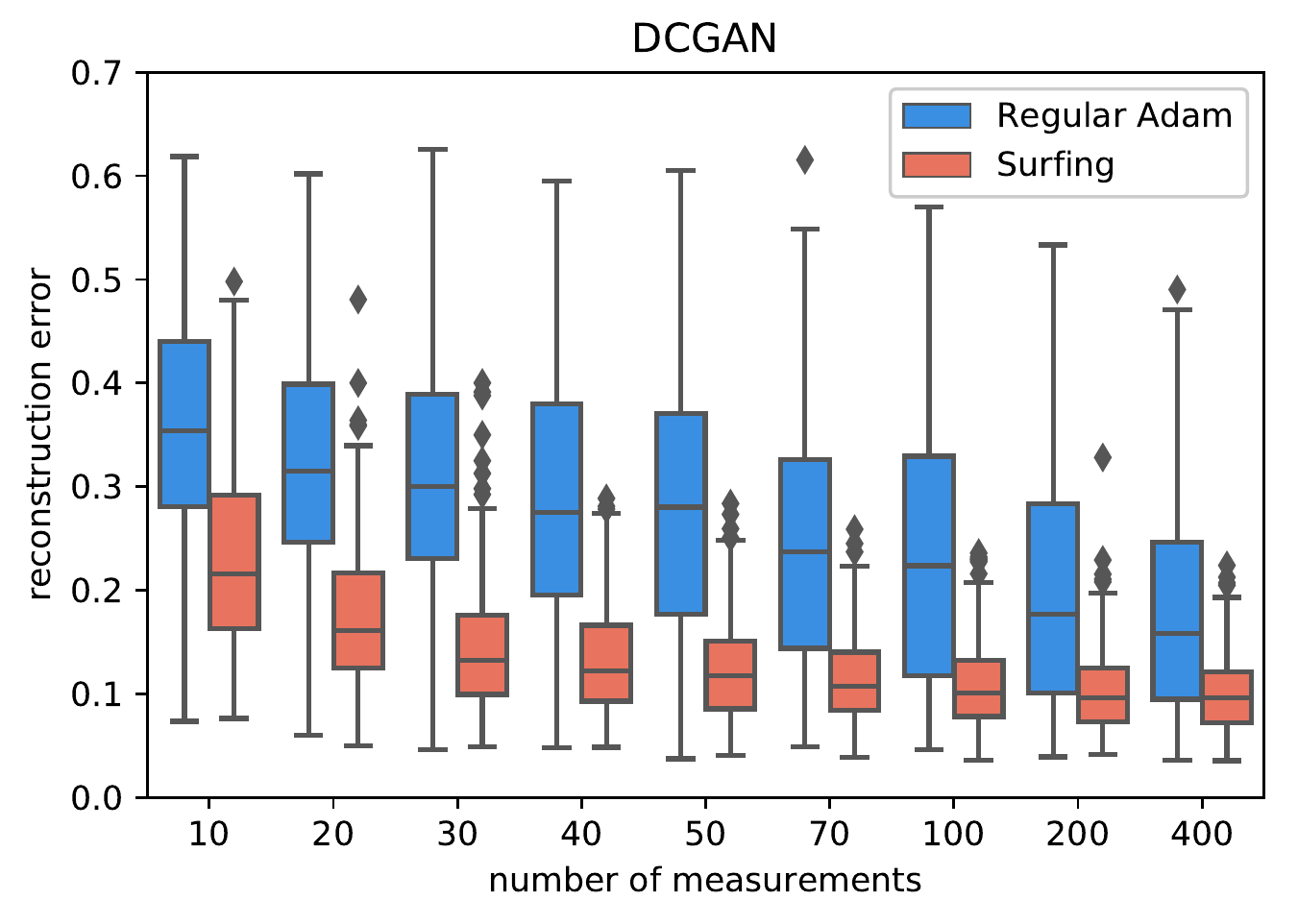} &
\includegraphics[width=.40\textwidth]{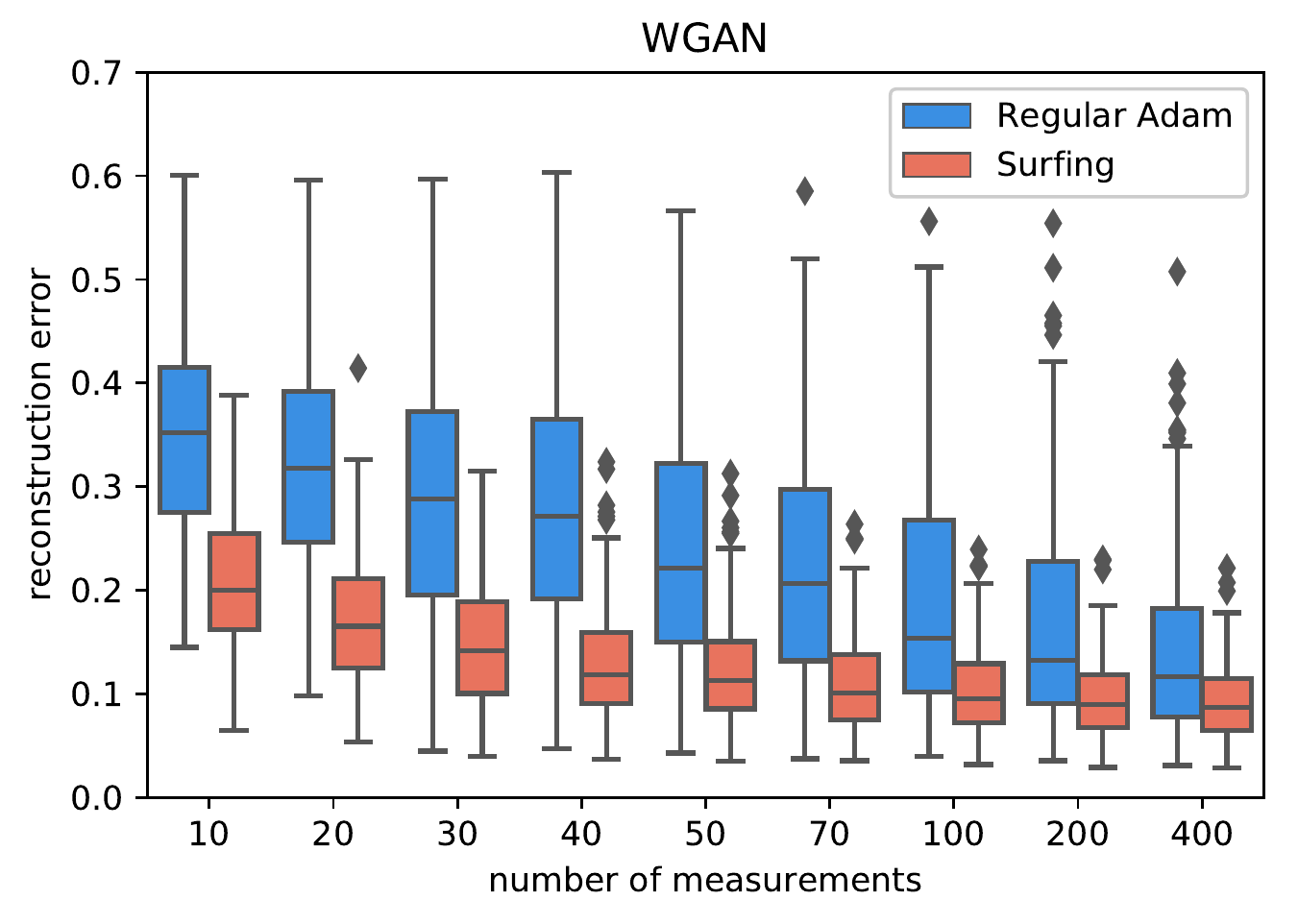} \\[-10pt]
\end{tabular}
\end{center}
\caption{Compressed sensing setting for approximation, or rate-distortion.
As a function of the number
of random measurements $m$, the box plots summarize the distribution of the per-pixel reconstruction errors for DCGAN and WGAN trained models, using surfing (red) and regular gradient descent with Adam (blue). }
\label{fig:reconstr}
\end{figure}


Figure~\ref{fig:more} shows additional plots for experiments comparing surfing over a sequence of networks during training to gradient descent over the final trained network. As described above, 
we consider the problem of minimizing the objective $f(x) = \frac{1}{2}\|G(x) - G(x_*)\|^2$, that is, recovering the image generated from a trained network $G(x) = G_{\theta_T}(x)$ with input $x_*$. We run surfing by taking a sequence of parameters $\theta_0, \theta_1,...,\theta_T$, where $\theta_0$ are the initial random parameters and the intermediate $\theta_t$'s are taken every 40 training steps. In order to improve convergence speed we use Adam \citep{kingma2014adam} to carry out gradient descent in each step in surfing. We also use Adam when optimizing over the just the final network. We apply surfing and regular Adam for 300 trials, where in each trial a randomly generated $x_*$ and initial point $x_{init}$ is chosen. Figure \ref{fig:more} shows
the distribution of the distance between the computed solution $\hat x_T$ and the truth $x_*$ for VAE, WGAN and WGAN-GP,
using surfing (red) and regular gradient descent with Adam (blue), over three different input dimensions $k$.

\begin{figure}[ht!]
\begin{center}
\begin{tabular}{ccc}
  \\[20pt]
\includegraphics[width=.3\textwidth]{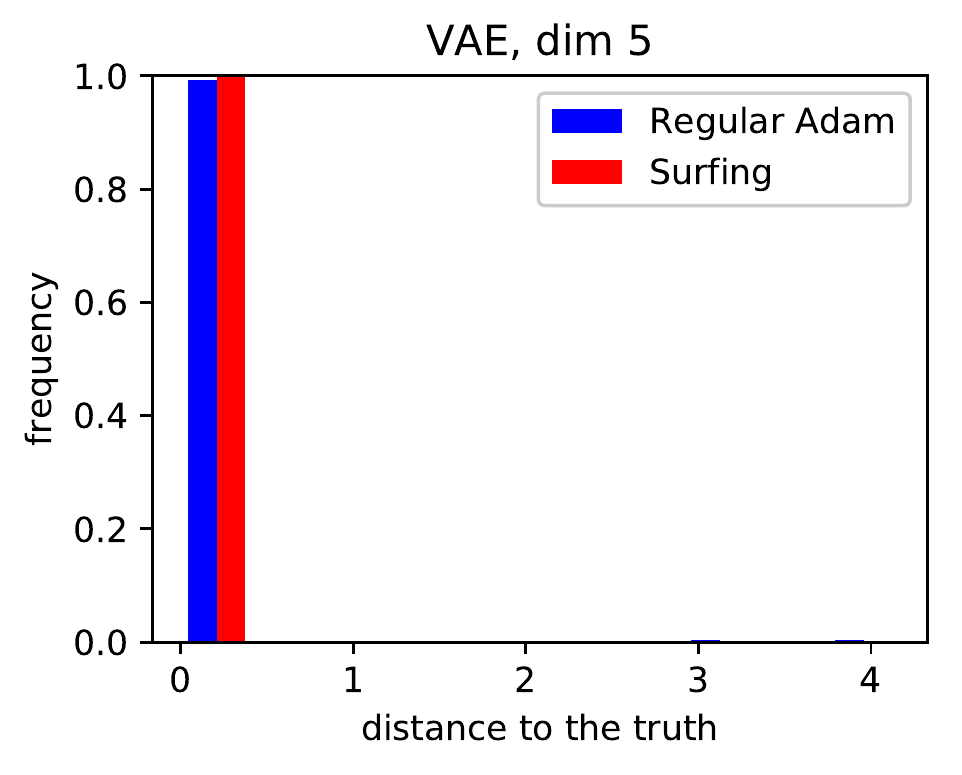} &
\includegraphics[width=.3\textwidth]{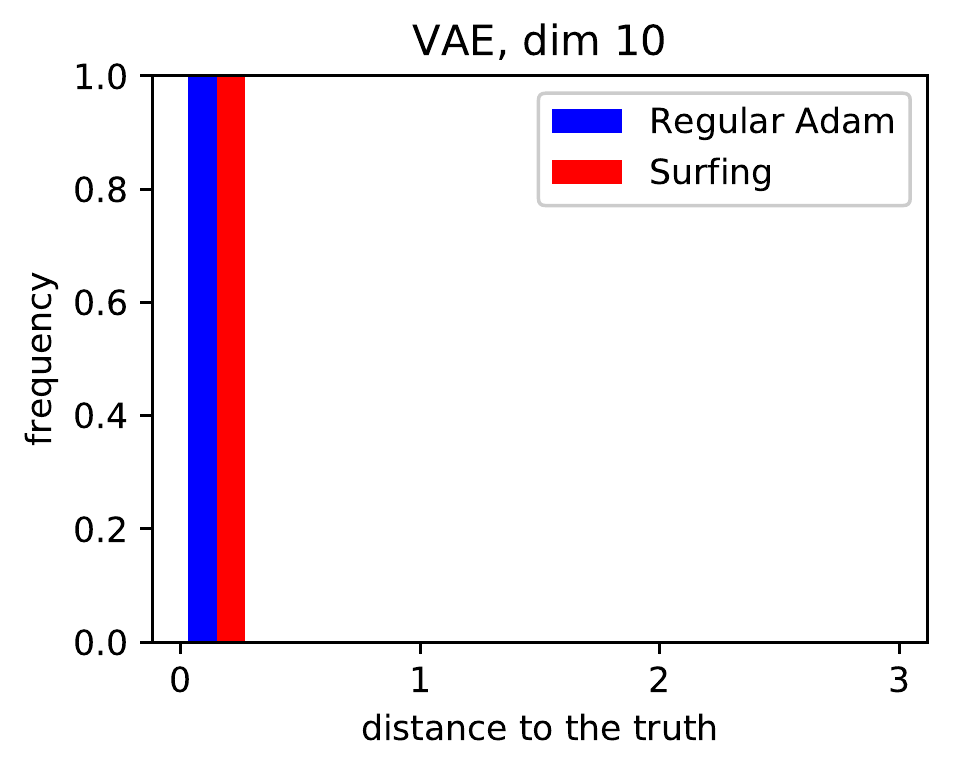} &
\includegraphics[width=.3\textwidth]{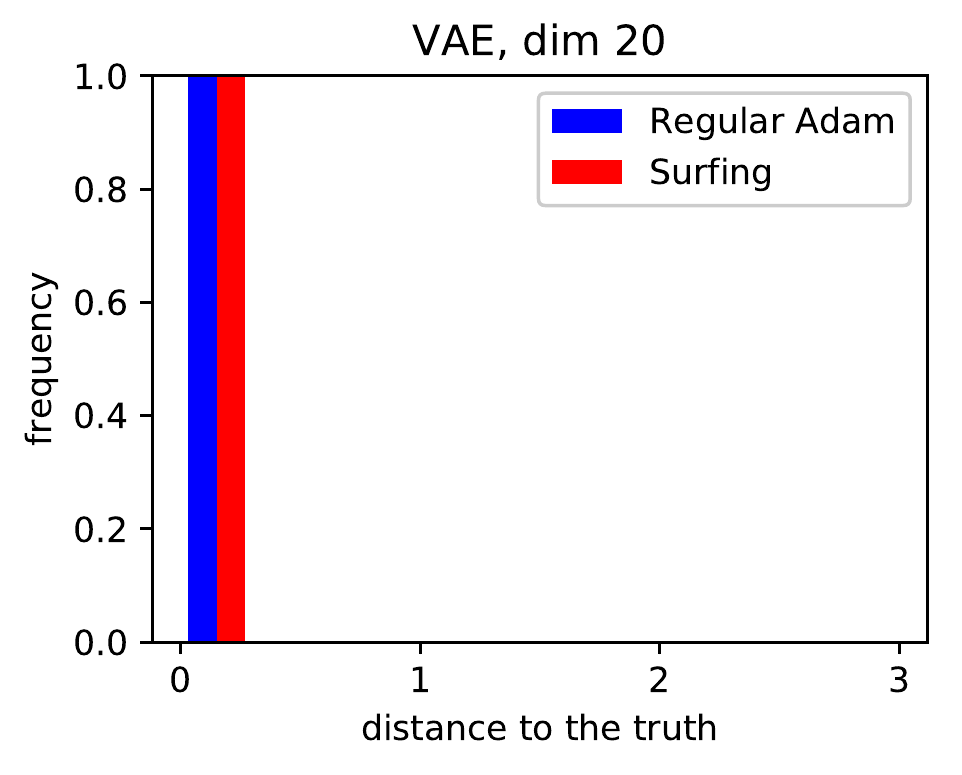} \\
\includegraphics[width=.3\textwidth]{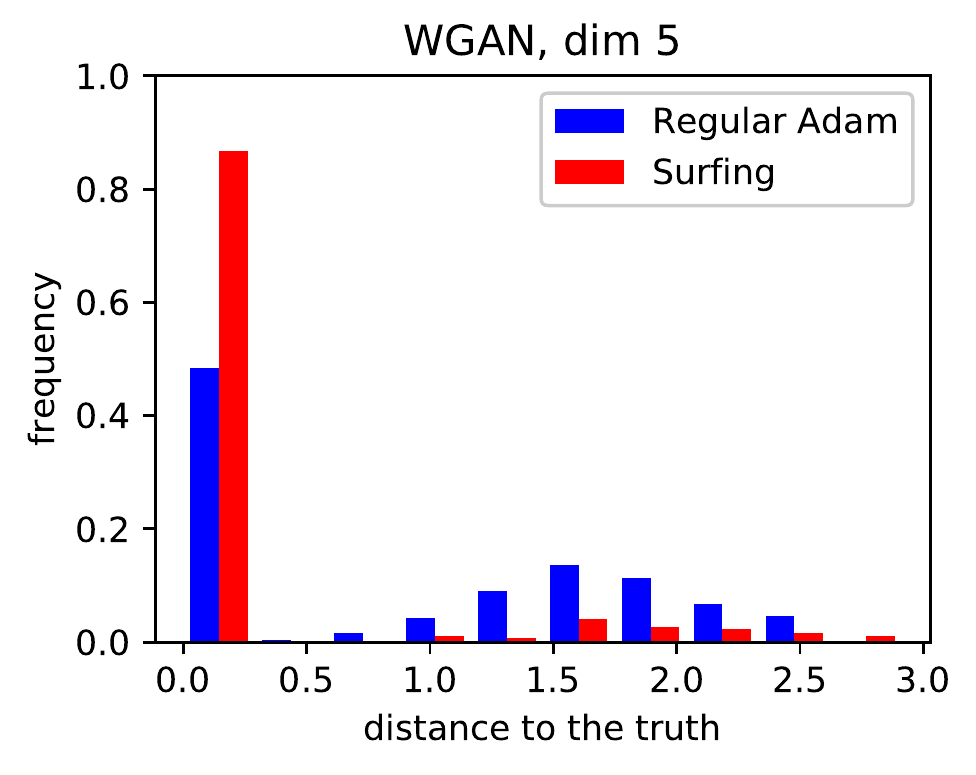} &
\includegraphics[width=.3\textwidth]{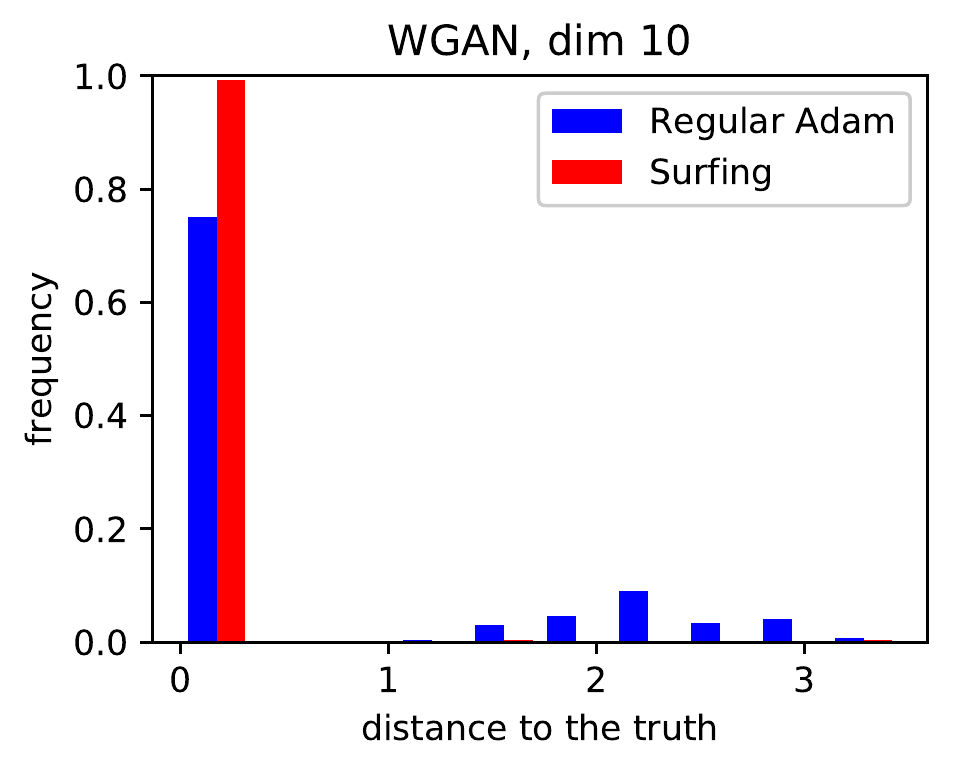} &
\includegraphics[width=.3\textwidth]{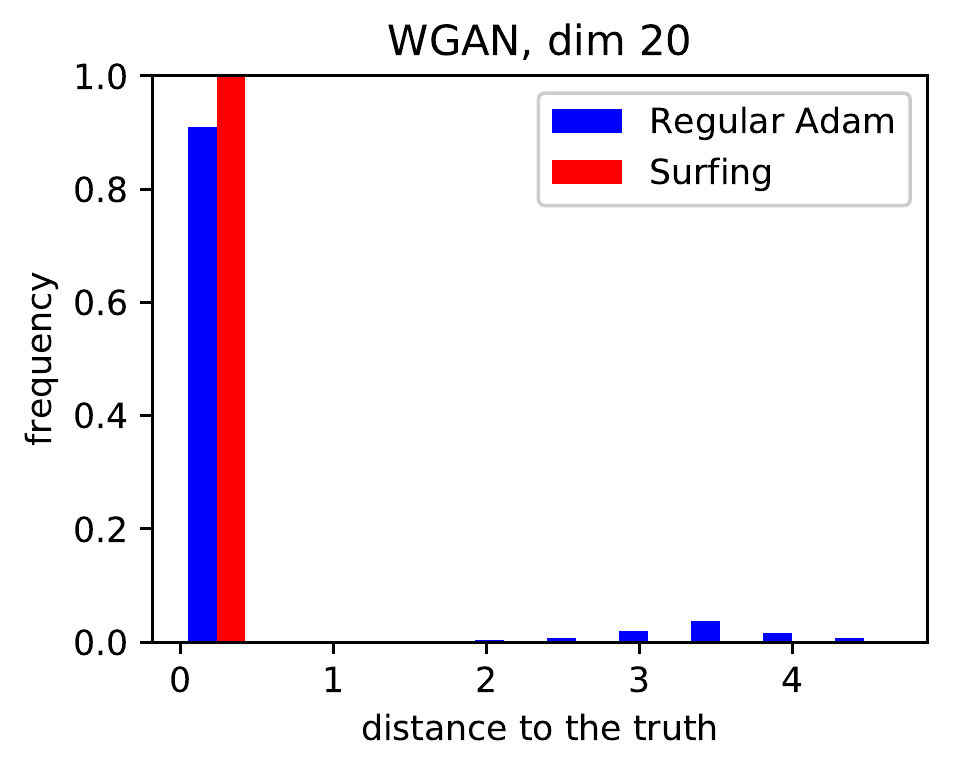}\\
\includegraphics[width=.3\textwidth]{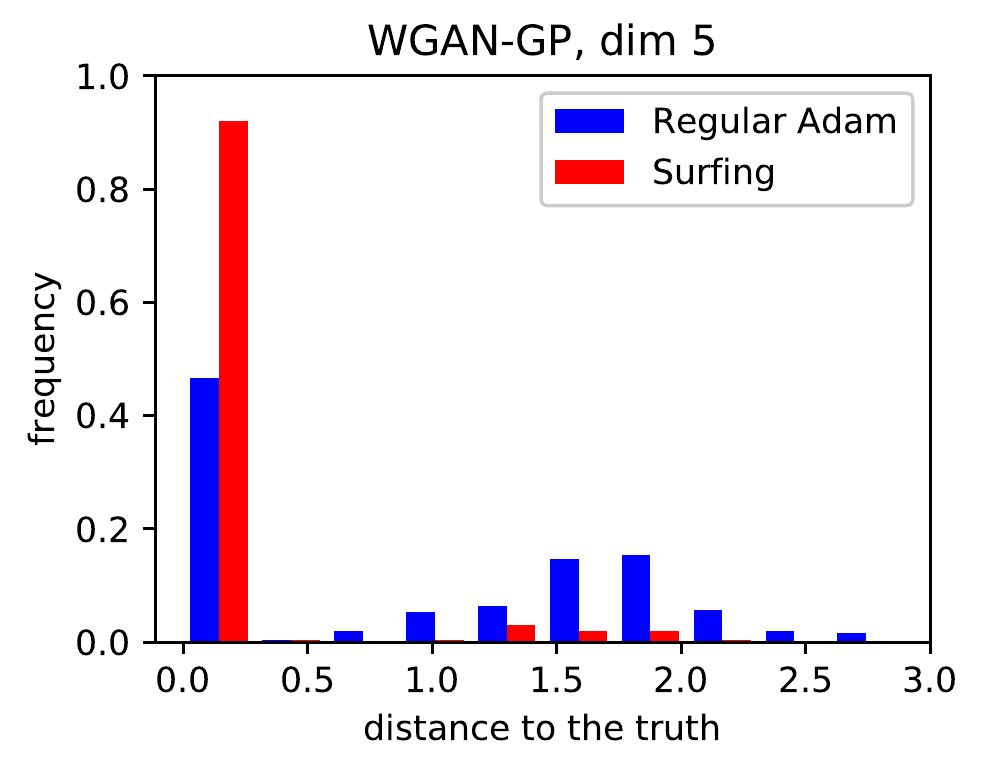} &
\includegraphics[width=.3\textwidth]{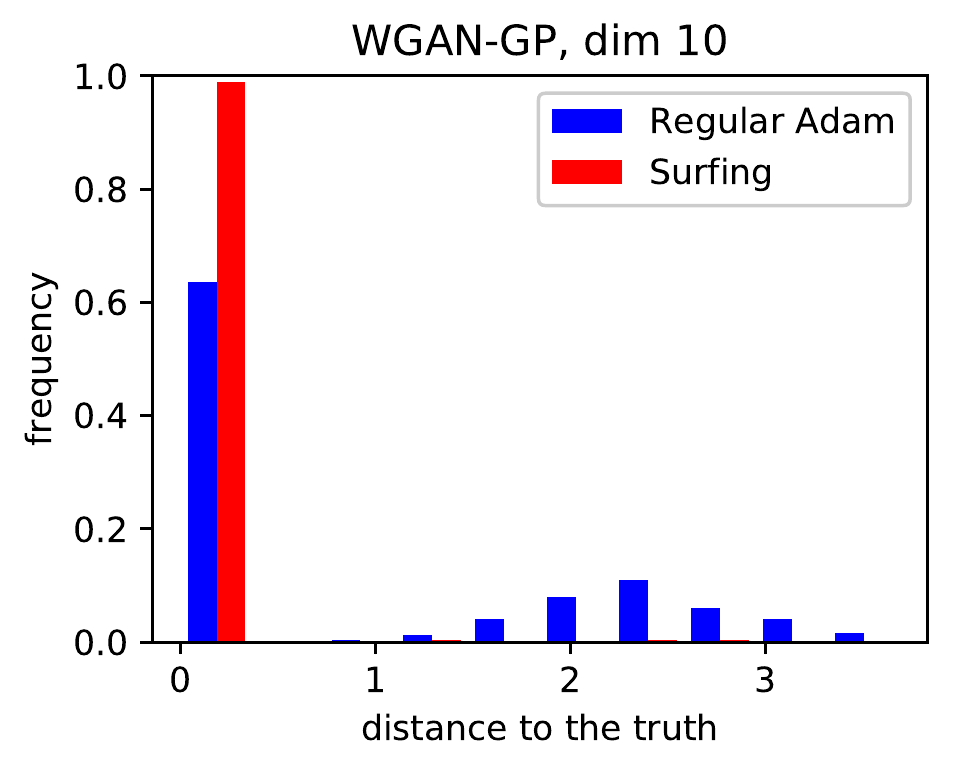} &
\includegraphics[width=.3\textwidth]{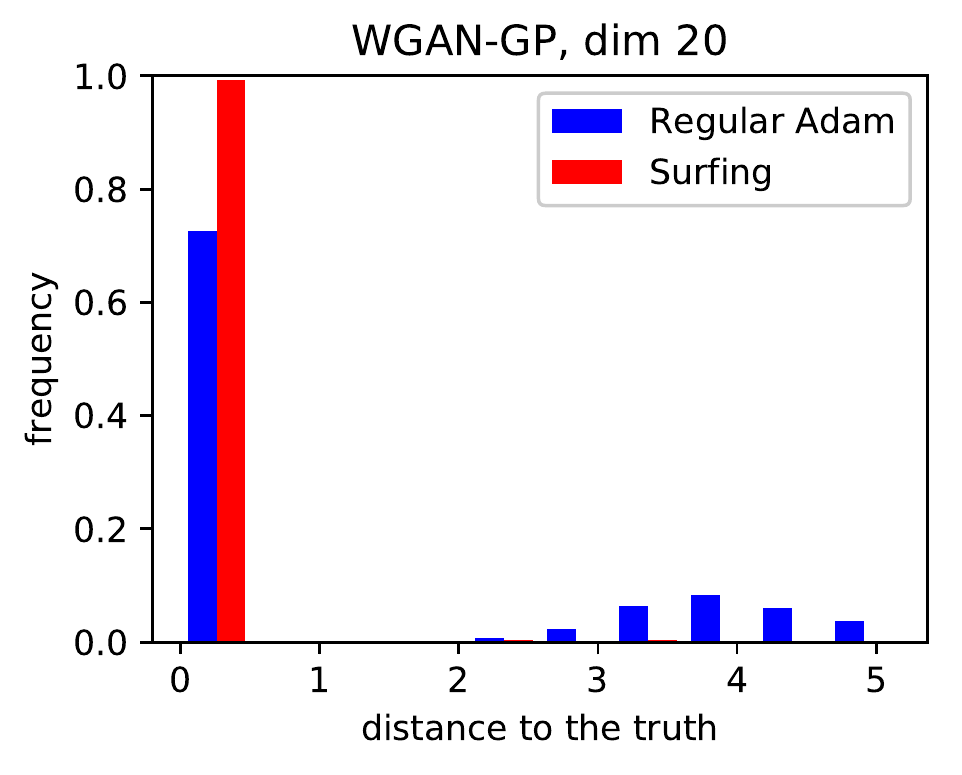}
\end{tabular}
\end{center}
\caption{Distribution of the distance between solution $\hat x_T$ and the truth $x_*$ for VAE, WGAN and WGAN-GP,
using surfing (red) and regular gradient descent with Adam (blue) over three different input dimensions $k$.}
\label{fig:more}
\end{figure}


\def\sign{\text{sign}}
\def\diag{\text{diag}}
\def\I{\mathbbm{1}}


\section{Proof of Theorem \ref{thm:init}}\label{sec:proof}

We denote $[n] = \{1,2,...,n\}$, $\Pi_{i=1}^d W_i=W_1W_2\ldots W_d$, and
$\Pi_{i=d}^1 W_i=W_dW_{d-1}\cdots W_1$. $\|x\|$ and $\|A\|$ are the Euclidean
vector norm and matrix operator norm. $C,C',c,c'>0$
denote $d$-dependent constants that may change from instance to instance.

We adapt ideas of \cite{hand2017global}.
Denote for simplicity $G(x)=G(x,\theta_0)$ and $f(x)=f_0(x)$.
Define
\begin{equation*}
  W_{i,+,v}=\diag(W_iv+b_i>0)W_i, \qquad b_{i,+,v}=\diag(W_iv+b_i>0)b_i
\end{equation*}
where $\text{diag}(w>0)$ denotes a diagonal matrix with $j$th diagonal element
$\I\{w_j>0\}$. Then
\begin{equation*}
  \sigma(W_iv+b_i)=W_{i,+,v}v+b_{i,+,v}.
\end{equation*}
The analysis of \cite{hand2017global} shows that the matrices
\begin{equation*}
  \tilde{W}_{i,+,v} \equiv \begin{pmatrix} W_{i,+,v} & b_{i,+,v} \end{pmatrix}
\in \R^{n_i \times (n_{i-1}+1)}
\end{equation*}
satisfy a certain Weight Distribution Condition (WDC), yielding a deterministic
approximation for $\tilde{W}_{i,+,v}^\top \tilde{W}_{i,+,v'}$ and any $v,v' \in
\R^{n_{i-1}}$. We will use the following consequence of this condition.

\begin{lemma}\label{lemma:WDC}
Under the conditions of Theorem \ref{thm:init},
with probability at least $1-C\sum_{i=1}^d n_ie^{-c\eps^2 n_{i-1}}$,
the following hold for every $i \in [d]$ and $v,v' \in \R^{n_{i-1}}$:
\begin{enumerate}
\item[(a)] $\|W_{i,+,v}\| \leq 2$ and $\|b_{i,+,v}\| \leq 2$.
\item[(b)] $\|W_{i,+,v}^\top W_{i,+,v'}-\frac{1}{2}I\| \leq \eps+\theta/\pi$,
where $\theta$ is the angle formed by $v$ and $v'$.
\item[(c)] $\|W_{i,+,v}^\top b_{i,+,v}\| \leq \eps$.
\end{enumerate}
\end{lemma}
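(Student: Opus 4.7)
My plan is to reduce all three parts to the Weight Distribution Condition (WDC) of Hand--Voroninski by absorbing the bias into an augmented weight matrix. Let $\tilde W_i=(W_i \mid b_i)\in\R^{n_i\times(n_{i-1}+1)}$; then the rows of $\tilde W_i$ are i.i.d.\ $\N(0,I/n_i)$, and the activation pattern $\diag(W_iv+b_i>0)$ coincides with $\diag(\tilde W_i\tilde v>0)$ for $\tilde v=(v,1)$. Since this pattern depends on $\tilde v$ only up to positive scaling, the WDC statement for $\tilde W_i$ is a uniform bound over the unit sphere $S^{n_{i-1}}$ in the augmented space, to which the arguments of Hand--Voroninski apply without modification.

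Part (a) is stand-alone. By the standard non-asymptotic operator-norm bound for a Gaussian matrix (e.g.\ Corollary 5.35 of Vershynin), $\|W_i\|\leq 1+\sqrt{n_{i-1}/n_i}+t$ with probability at least $1-2e^{-cn_it^2}$, and a chi-squared tail bound controls $\|b_i\|$. Under the expansivity hypothesis $n_i\gtrsim n_{i-1}\log n_i/\eps^2$, this yields $\|W_i\|\leq 2$ and $\|b_i\|\leq 2$ except on an event of probability at most $Cn_ie^{-c\eps^2 n_{i-1}}$. Since zeroing rows is contractive in operator norm, $\|W_{i,+,v}\|\leq\|W_i\|$ and $\|b_{i,+,v}\|\leq\|b_i\|$ hold uniformly in $v$.

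For parts (b) and (c), the WDC supplies, on an event of the stated probability,
\[
\bigl\|\tilde W_{i,+,v}^\top\tilde W_{i,+,v'}-Q_i(v,v')\bigr\|\leq\eps\quad\text{uniformly in }v,v'\in\R^{n_{i-1}},
\]
where $Q_i(v,v'):=\E[\tilde W_{i,+,v}^\top\tilde W_{i,+,v'}]$. A direct integration in the two-plane spanned by the unit vectors $\tilde u=\tilde v/\|\tilde v\|$ and $\tilde u'=\tilde v'/\|\tilde v'\|$ gives the closed form
\[
Q_i(v,v')=\frac{\pi-\tilde\theta}{2\pi}\,I_{n_{i-1}+1}+R,\qquad \|R\|\lesssim\sin\tilde\theta,
\]
with $R$ of rank at most $2$ supported on that two-plane and $\tilde\theta$ the angle between $(v,1)$ and $(v',1)$. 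Part (b) is the upper-left $n_{i-1}\times n_{i-1}$ block of this identity: using $|\tfrac12-\tfrac{\pi-\tilde\theta}{2\pi}|=\tilde\theta/(2\pi)$ and $\|R\|\lesssim\tilde\theta$, together with $\tilde\theta\leq\theta$ (appending a shared coordinate $1$ cannot increase the angle, as a short computation with $\cos\tilde\theta=(v\cdot v'+1)/\sqrt{(\|v\|^2+1)(\|v'\|^2+1)}$ confirms), yields $\|W_{i,+,v}^\top W_{i,+,v'}-\tfrac12 I\|\leq\eps+\theta/\pi$ after absorbing constants into $\eps$. Part (c) follows from the off-diagonal column block with $v'=v$: then $\tilde\theta=0$, $Q_i(v,v)=\tfrac12 I_{n_{i-1}+1}$, its upper-right column vanishes, and the WDC approximation alone gives $\|W_{i,+,v}^\top b_{i,+,v}\|\leq\eps$.

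The main obstacle is the uniform-in-$(v,v')$ bound of the WDC itself. This requires (i) pointwise sub-exponential Bernstein concentration, exploiting that each entry of $\tilde W_{i,+,v}^\top\tilde W_{i,+,v'}$ is a sum of $n_i$ i.i.d.\ sub-exponential summands, (ii) an $\eps$-net argument over $S^{n_{i-1}}$ with cardinality $e^{O(n_{i-1}\log(1/\eps))}$, and (iii) a Lipschitz-continuity step between net points handled by conditioning on the event in (a). I would import all three ingredients from Hand--Voroninski; what is genuinely new is only the block-wise identification in $Q_i(v,v')$ that delivers the precise form of (b) and (c).
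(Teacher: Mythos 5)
Your proposal follows essentially the same route as the paper's own proof: part (a) from the standard Gaussian operator-norm and $\chi^2$ bounds together with the fact that zeroing rows cannot increase the norms, and parts (b)--(c) by absorbing the bias into the augmented matrix $\tilde W_i=(W_i \;\; b_i)$, invoking the uniform WDC of Hand--Voroninski (their Lemma 11) for $\tilde W_{i,+,v}$, using the explicit form of $Q$ to get $\|\tilde W_{i,+,v}^\top \tilde W_{i,+,v'}-\tfrac12 I\|\leq \eps+\tilde\theta/\pi$, and then reading off the upper-left block for (b) and the last column with $v'=v$ for (c). One shared caveat (present in the paper's proof as well): the asserted inequality $\tilde\theta\leq\theta$ is not valid for arbitrary $v,v'\in\R^{n_{i-1}}$ --- e.g.\ parallel vectors of different norms have $\theta=0$ but $\tilde\theta>0$ --- so your ``short computation'' only confirms it under a restriction such as $\|v\|=\|v'\|$ or $v=v'$, which is what the later applications of the lemma actually use.
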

\begin{proof}
For (a), note that $\|W_i\| \leq 2$ and
$\|b_i\| \leq 2$ with probability $1-e^{-cn_i}$, by a standard
$\chi^2$ tail-bound and operator norm bound for a Gaussian matrix. On the event
that these hold, the bounds
hold also for $W_{i,+,v}$ and $b_{i,+,v}$ and every $v \in \R^{n_{i-1}}$.

For (b) and (c), by \cite[Lemma 11]{hand2017global}, with probability
$1-8n_ie^{-c\eps^2 n_{i-1}}$ the matrix
$\tilde{W}_{i,+,v}$ satisfies WDC
with constant $\eps$ for every $v$.
(The dependence of the constants $c,\gamma$ in
\cite[Lemma 11]{hand2017global} are given by $c \gtrsim \eps^{-2}\log \eps^{-1}$
and $\gamma \lesssim \eps^2$ as indicated in the proof. This condition for $c$
matches the growth rate of $n_i$ specified in our Theorem \ref{thm:init}.)
From the form of $Q$ in \cite[Definition 2]{hand2017global}, the WDC implies
\begin{equation*}
  \left\|\tilde{W}_{i,+,v}^\top \tilde{W}_{i,+,v'}-\frac{1}{2}I\right\|
\leq \eps+\tilde{\theta}/\pi
\end{equation*}
where $\tilde{\theta}$ is the angle between $(v,1)$ and $(v',1)$. Noting that
$\tilde{\theta} \leq \theta$ and recalling the definition of
$\tilde{W}_{i,+,v}$, we get (b) and (c).
\end{proof}

For $x \in \R^k$, let $x_0=x$ and let $x_i=\sigma(W_i \ldots
\sigma(W_1x+b_1) \ldots + b_i)$ be the output of the $i$th layer. Denote
\begin{equation*}
W_{i,x}=W_{i,+,x_{i-1}}, \qquad b_{i,x}=b_{i,+,x_{i-1}}.
\end{equation*}
Then also $x_i=W_{i,x}x_{i-1}+b_{i,x}$.

\begin{lemma}\label{lemma:subspacecount}
Under the conditions of Theorem \ref{thm:init}, with probability 1, the total
number of distinct possible tuples $(W_{1,x},b_{1,x},\ldots,W_{d,x},b_{d,x})$
satisfies
\begin{equation*}
|\{(W_{1,x},b_{1,x},\ldots,W_{d,x},b_{d,x}):x \in \R^k\}|
\leq 10^{d^2}(n_1\ldots n_d)^{d(k+1)}.
\end{equation*}
\end{lemma}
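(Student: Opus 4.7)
The tuple $(W_{1,x},b_{1,x},\ldots,W_{d,x},b_{d,x})$ is determined entirely by the activation patterns $\sign(W_ix_{i-1}+b_i)\in\{0,1\}^{n_i}$ for $i=1,\ldots,d$, so my plan is to count realizable sign-pattern sequences as $x$ ranges over $\R^k$ by tracking how each layer subdivides $\R^k$ into polyhedral regions, and applying a standard hyperplane arrangement bound at each layer.

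Let $P_i$ denote the number of open regions in the partition of $\R^k$ induced by the activation patterns of layers $1,\ldots,i$. I would prove by induction on $i$ the joint claim that these regions are polyhedral and that on each region the map $x\mapsto x_i$ is affine in $x$. The base case $P_0=1$ with $x_0=x$ is immediate. For the inductive step, on a region $R$ where $x\mapsto x_{i-1}$ is affine, the map $x\mapsto W_ix_{i-1}+b_i$ is affine from $\R^k$ to $\R^{n_i}$, and its $n_i$ coordinate signs are governed by $n_i$ affine hyperplanes in $\R^k$. By the standard hyperplane arrangement bound these hyperplanes cut $R$ into at most $\sum_{j=0}^k\binom{n_i}{j}\le 2n_i^k$ open sub-regions (for $n_i\ge k\ge 1$), and on each sub-region the layer-$i$ mask is constant so $x_i=W_{i,x}x_{i-1}+b_{i,x}$ is again affine in $x$. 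This yields the recursion $P_i\le 2n_i^kP_{i-1}$.

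Iterating across the $d$ layers gives $P_d\le 2^d(n_1\cdots n_d)^k$, which is loosely dominated by $10^{d^2}(n_1\cdots n_d)^{d(k+1)}$. The ``probability 1'' qualifier is needed only to exclude measure-zero degeneracies, such as some coordinate $w_{i,j}^\top x_{i-1}+b_{i,j}$ vanishing identically on a region of positive measure, or two hyperplanes coinciding; these events do not occur under the continuous Gaussian law of the weights and biases.

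The main obstacle I anticipate is making the inductive affineness claim rigorous: $x_{i-1}$ is globally only piecewise-linear, so one has to restrict attention to the open regions of the current partition and observe that the affine extension used to define the layer-$i$ hyperplanes is constant precisely where the layer-$(i{-}1)$ mask is constant. Once this bookkeeping is in place, the sub-region bound at each layer applies cleanly on each parent region and the count follows by straightforward multiplication of the per-layer bounds.
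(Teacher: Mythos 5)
Your overall strategy---induct over layers, track the convex polyhedral cells of $\mathbb{R}^k$ on which all masks so far are constant (so that $x\mapsto x_i$ is affine on each cell), and bound the per-layer refinement by the classical arrangement count $\sum_{j=0}^k\binom{n_i}{j}\le 2n_i^k$---is sound, and it would in fact give a sharper bound, $2^d(n_1\cdots n_d)^k$, than the lemma requires. It is also the same counting idea as the paper's: the paper does not redo the arrangement argument but instead applies Lemma 15 of Hand and Voroninski once per layer (to the $(k+1)$-dimensional affine space containing the previous layer's output) to get per-layer counts $10^i(n_1\cdots n_i)^{k+1}$, and then multiplies these; your version works directly in the input space and composes the refinements, which is tidier and tighter.

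The one genuine gap is the identification of the number of distinct tuples with the number of open regions $P_d$. The tuple $(W_{1,x},b_{1,x},\ldots,W_{d,x},b_{d,x})$ is defined for \emph{every} $x\in\mathbb{R}^k$, including points on cell boundaries where some pre-activation $w_{i,j}^\top x_{i-1}+b_{i,j}$ equals zero exactly; because the mask is $\operatorname{diag}(\cdot>0)$, such a point can in principle realize an activation pattern attained on no open cell (the tight coordinates are forced to $0$, and one would need an adjacent open cell on which all of them are simultaneously strictly negative while every other sign is preserved---for deeper layers, whose pre-activations are only piecewise affine near that point and may, e.g., be nonnegative with an isolated zero there, this is not automatic). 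So as written, $P_d$ does not bound the cardinality in the statement. The fix is cheap given the slack in the target bound: either count all realizable sign vectors in $\{-,0,+\}^{n_i}$ (i.e., all faces of each per-cell arrangement), which is still of order $n_i^k$ up to a $k$-dependent constant, so the product stays far below $10^{d^2}(n_1\cdots n_d)^{d(k+1)}$; or give a genericity argument---this is where the ``probability $1$'' does real work beyond the degeneracies you list---showing each boundary pattern coincides with that of a neighboring open cell. The paper sidesteps exactly this bookkeeping by citing Hand--Voroninski's Lemma 15, which is stated directly as a bound on the number of distinct matrices $\operatorname{diag}(Wx>0)W$ over all $x$.
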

\begin{proof}
Let $S=\R^{k+1}$, which contains $(x,1)$. Then
the result of \cite[Lemma 15]{hand2017global} applied to the vector space
$S$ and to $\tilde{W}_{1,x}=(W_{1,x} \;\; b_{1,x})$ yields
\begin{equation*}
|\{(W_{1,x},b_{1,x}:x \in \R^k)\}| \leq 10n_1^{k+1}.
\end{equation*}
Each distinct $(W_{1,x},b_{1,x})$ defines an affine linear space of
dimension $k$ which contains the first layer output $x_1$, and hence a subspace
$S$ of dimension $k+1$ which contains $(x_1,1)$. Applying
\cite[Lemma 15]{hand2017global} to each such $S$ and $\tilde{W}_{2,x}$ yields
\begin{equation*}
|\{(W_{2,x},b_{2,x}:x \in \R^k)\}| \leq 10n_1^{k+1} \cdot 10n_2^{k+1}.
\end{equation*}
Proceeding inductively,
\begin{equation*}
|\{(W_{i,x},b_{i,x}:x \in \R^k)\}| \leq 10^i (n_1\ldots n_i)^{k+1},
\end{equation*}
which is analogous to \cite[Lemma 16]{hand2017global} in our setting with
biases $b_1,\ldots,b_d$.
The result follows from taking the product over $i=1,\ldots,d$.
\end{proof}

\begin{lemma}\label{lemma:RIP}
Let $A \in \R^{m \times n}$ have i.i.d.\ $\N(0,1/m)$ entries. Fix $\eps>0$, let
$k<n$, and let
$V=\bigcup_{i=1}^M V_i$ and $W=\bigcup_{j=1}^N W_j$ where $V_i$ and $W_j$ are
subspaces of dimension at most $k$. Then with probability at least
$1-MN(c/\eps)^{2k}e^{-c'\eps m}$, for all $x \in V$ and $y \in W$ we have
\begin{equation*}
|x^\top A^\top Ay-x^\top y| \leq \eps\|x\|\|y\|.
\end{equation*}
\end{lemma}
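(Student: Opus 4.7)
The plan is to combine a Bernstein--Hanson--Wright concentration bound for Gaussian bilinear forms with an $\eps$-net argument on each pair of subspaces, and then take a union bound over the $MN$ pairs. This is a standard restricted-isometry style calculation, adapted to bilinear rather than quadratic forms.

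First, I would fix a pair of indices $i,j$ and reduce to unit vectors. Since the form $(x,y)\mapsto x^\top A^\top Ay - x^\top y$ is homogeneous of degree one in each argument, it suffices to show $|u^\top A^\top Av-u^\top v| \leq \eps$ uniformly over $u \in V_i \cap S^{n-1}$ and $v \in W_j \cap S^{n-1}$. Build $(\eps/C)$-nets $\mathcal{N}_V \subset V_i \cap S^{n-1}$ and $\mathcal{N}_W \subset W_j \cap S^{n-1}$; by the usual volume argument inside a $k$-dimensional subspace, each net can be taken of cardinality at most $(C/\eps)^k$.

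Second, for fixed unit $u,v$, write $u^\top A^\top Av = \sum_{\ell=1}^m (a_\ell^\top u)(a_\ell^\top v)$, where $a_\ell \in \R^n$ is the $\ell^\text{th}$ row of $A$ with $a_\ell \sim \mathcal{N}(0,I_n/m)$. Each summand is a product of two jointly Gaussian scalars, hence sub-exponential with mean $u^\top v/m$ and sub-exponential norm of order $1/m$. Bernstein's inequality for sums of sub-exponential random variables then yields
\begin{equation*}
  \P\bigl(|u^\top A^\top Av - u^\top v| \geq \eps/2\bigr) \leq 2\exp(-c'\eps m)
\end{equation*}
in the relevant parameter regime for Theorem~\ref{thm:init}. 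A union bound over the $|\mathcal{N}_V|\cdot|\mathcal{N}_W| \leq (C/\eps)^{2k}$ product-net pairs controls the error at all net points simultaneously with probability at least $1-(C/\eps)^{2k}e^{-c'\eps m}$.

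Third, I would pass from the net to arbitrary unit vectors $(x,y)$ in $V_i \times W_j$ by a standard approximation argument. Pick $u \in \mathcal{N}_V$ and $v\in\mathcal{N}_W$ with $\|x-u\|,\|y-v\| \leq \eps/C$, write
\begin{equation*}
  x^\top A^\top Ay - x^\top y = (u^\top A^\top Av - u^\top v) + (x-u)^\top A^\top Ay + u^\top A^\top A(y-v) - (x-u)^\top y - u^\top (y-v),
\end{equation*}
and absorb the cross terms into the supremum $E=\sup|x^\top A^\top Ay - x^\top y|$ over unit vectors in $V_i\times W_j$. This produces an inequality of the form $E \leq \eps/2 + 2(\eps/C)(E+1)$, which for $C$ large enough rearranges to $E \leq \eps$. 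A final union bound over the $MN$ pairs $(V_i,W_j)$ multiplies the failure probability by $MN$ and gives the stated conclusion. The main obstacle is just bookkeeping: keeping the net scale, the Bernstein regime, and the net-to-subspace lifting constants consistent so that the exponent $c'\eps m$ (rather than $c'\eps^2 m$) actually matches the growth of $m$ assumed in Theorem~\ref{thm:init}; the argument itself is otherwise routine.
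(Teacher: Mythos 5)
The paper does not actually prove Lemma \ref{lemma:RIP} from scratch: its proof is a one-line citation to Lemma 14 of \cite{hand2017global}. Your net-plus-Bernstein plan is the natural direct route, and its architecture is sound and standard: reduce to unit vectors by homogeneity, take $(\eps/C)$-nets of cardinality $(C/\eps)^k$ inside each $k$-dimensional subspace, concentrate the bilinear form at net points, lift from the net to the whole pair of subspaces by the successive-approximation inequality $E \leq \eps/2 + 2(\eps/C)(E+1)$, and union bound over the $MN$ pairs.

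The genuine gap is exactly the point you defer as ``bookkeeping'': the per-pair tail. Writing $u^\top A^\top A v=\sum_{\ell=1}^m (a_\ell^\top u)(a_\ell^\top v)$ with sub-exponential summands of norm $O(1/m)$, Bernstein gives $\P\bigl(|u^\top A^\top Av-u^\top v|\geq \eps/2\bigr)\leq 2\exp\bigl(-cm\min(\eps,\eps^2)\bigr)$, and since the lemma is invoked with $\eps<\eps_0<1$, this is $2\exp(-c\eps^2 m)$, not $2\exp(-c'\eps m)$. This is not slack you can recover by tuning net scales or constants: for a single pair of orthogonal unit vectors, $u^\top A^\top Av$ is $\frac{1}{m}\sum_\ell g_\ell h_\ell$ with $g_\ell,h_\ell$ independent standard Gaussians, whose large-deviation rate at level $\eps$ is $\asymp \eps^2$ for small $\eps$; so the failure probability of even one pair is $e^{-\Theta(\eps^2 m)}$, which for small $\eps$ and large $m$ is not dominated by $e^{-c'\eps m}$. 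As written, your argument therefore yields the lemma with $e^{-c'\eps^2 m}$ in place of $e^{-c'\eps m}$, and that difference is not cosmetic downstream: Lemma \ref{lma:1} uses the condition $m \geq C'k(\eps^{-1}\log\eps^{-1})\log(n_1\cdots n_d)$ precisely to absorb the subspace-counting factor into $e^{-c\eps m}$, and with an $\eps^2$ exponent the measurement requirement would have to scale like $k\eps^{-2}$ instead. So you should either prove and state the $\eps^2$ version and propagate the change through Lemma \ref{lma:1} and Theorem \ref{thm:init}, or do as the paper does and invoke the cited lemma of \cite{hand2017global} as a black box; the stated $e^{-c'\eps m}$ exponent does not follow from the Bernstein step as you have set it up.
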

\begin{proof}
See \cite[Lemma 14]{hand2017global}.
\end{proof}

Using these results, we analyze the gradient and critical points of $f(x)$.
Note that with the above definitions,
\begin{align*}
G(x)&=V(W_{d,x} \ldots(W_{1,x}x+b_{1,x}) \ldots+b_{d,x})\\
&=V\left(\prod_{i=d}^1 W_{i,x}\right)x
+V\sum_{j=1}^d \left(\prod_{i=d}^{j+1} W_{i,x}\right)b_{j,x}.
\end{align*}
The function $G(x)$ is piecewise linear in $x$, so $f(x)$ is piecewise
quadratic.
If $f(x)$ is differentiable at $x$, then the gradient of $f$ can be written as
\begin{align*}
\nabla f(x)&=\left(\prod_{i=1}^d W_{i,x}^\top\right)V^\top A^\top
\left(AV\left(\prod_{i=d}^1 W_{i,x}\right)x
+AV\sum_{j=1}^d \left(\prod_{i=d}^{j+1} W_{i,x}\right)b_{j,x}
-Ay\right).
\end{align*}

\begin{lemma}
\label{lma:1}
Define
\begin{equation*}
g_x=2^{-d}x-\left(\prod_{i=1}^d W_{i,x}^\top \right) V^\top y
\end{equation*}
Under the conditions of Theorem \ref{thm:init}, we have with probability
$1-C(e^{-c\eps m}+e^{-c\eps n}+\sum_i n_ie^{-c\eps^2 n_{i-1}})$ that
at every $x \in \R^k$ where $f$ is differentiable,
\begin{equation*}
\|\nabla f(x) - g_x\| \leq C'\eps (1+\|x\|+\|y\|)
\end{equation*}
\end{lemma}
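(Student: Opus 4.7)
The plan is to expand $\nabla f(x)$ using the piecewise-linear representation $G(x)=P_x x+q_x$, where
\[P_x=V\prod_{i=d}^{1}W_{i,x},\qquad q_x=V\sum_{j=1}^d\Bigl(\prod_{i=d}^{j+1}W_{i,x}\Bigr)b_{j,x},\]
so that $\nabla f(x)=T_1+T_2-T_3$ with $T_1=P_x^\top A^\top AP_x x$, $T_2=P_x^\top A^\top Aq_x$ and $T_3=P_x^\top A^\top Ay$. Since $g_x=2^{-d}x-P_x^\top y$, the problem reduces to proving three uniform-in-$x$ approximations on a single high-probability event: (a)~$\|T_1-2^{-d}x\|\lesssim \eps\|x\|$, (b)~$\|T_2\|\lesssim \eps$, and (c)~$\|T_3-P_x^\top y\|\lesssim \eps\|y\|$. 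Summing these and applying the triangle inequality then gives the stated bound.

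Each of (a)--(c) follows from the same three-step recipe. First, apply Lemma~\ref{lemma:RIP} to $A$ to replace $A^\top A$ by the identity when restricted to the subspaces that contain $P_x x$, $q_x$, $y$ and the image of $P_x^\top$. Second, apply Lemma~\ref{lemma:RIP} again, now to the Gaussian matrix $V\in\R^{n\times n_d}$ (whose entries are $\N(0,1/n)$), to replace $V^\top V$ by the identity, thereby reducing $P_x^\top P_x$ and $P_x^\top q_x$ to products of the matrices $W_{i,x}$ alone. Third, peel off the factors $W_{i,x}^\top W_{i,x}\approx \tfrac{1}{2}I$ one layer at a time using Lemma~\ref{lemma:WDC}(b) with $v=v'$, giving $W_{1,x}^\top\cdots W_{d,x}^\top W_{d,x}\cdots W_{1,x}\approx 2^{-d}I$; and handle the $d$ summands of $T_2$ by invoking Lemma~\ref{lemma:WDC}(c) on the innermost cross term $W_{j,x}^\top b_{j,x}$. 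The operator-norm bounds $\|W_{i,x}\|,\|b_{i,x}\|\le 2$ from Lemma~\ref{lemma:WDC}(a) and a standard bound $\|V\|\le C$ control the amplification constants that multiply each $\eps$-sized error across the $d$ layers. For (c), only the two identity-approximation steps are needed and $P_x^\top y$ is left intact.

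The main obstacle is uniformity in $x$: both $P_x,q_x$ and the subspaces entering Lemma~\ref{lemma:RIP} depend on $x$ through the ReLU activation patterns. The remedy is Lemma~\ref{lemma:subspacecount}, which bounds the number of distinct activation tuples $(W_{1,x},b_{1,x},\ldots,W_{d,x},b_{d,x})$ by $N:=10^{d^2}(n_1\ldots n_d)^{d(k+1)}$. For each such tuple the vectors involved lie in subspaces of dimension at most $k+1$, so the union of Lemma~\ref{lemma:RIP} over all $N^2$ subspace pairs fails with probability at most $N^2(c/\eps)^{2k}e^{-c'\eps m}$. The hypothesis $m\ge Ck(\eps^{-1}\log\eps^{-1})\log(n_1\ldots n_d)$ in Theorem~\ref{thm:init} is precisely what is required to absorb $N^2(c/\eps)^{2k}$ into the exponential and obtain the $e^{-c\eps m}$ tail. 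The analogous union bound applied to $V$ contributes the $e^{-c\eps n}$ tail, using $n\ge n_d$.

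To assemble the final event, I would intersect four high-probability sets: the WDC event of Lemma~\ref{lemma:WDC}, failing with probability $C\sum_i n_i e^{-c\eps^2 n_{i-1}}$; the uniform RIP for $A$ across all activation patterns, failing with probability $e^{-c\eps m}$; the uniform RIP for $V$, failing with probability $e^{-c\eps n}$; and the Gaussian spectral bound $\|V\|\le C$. On this intersection the three-step recipe delivers (a)--(c) simultaneously for every $x\in\R^k$, and the triangle inequality closes the argument. The only nontrivial bookkeeping is to verify that in each of $T_1$, $T_2$, $T_3$ the sequence of substitutions (RIP for $A$, RIP for $V$, then repeated WDC) produces an error that is telescopically bounded by a constant multiple of $\eps$ times $\|x\|$, $1$, or $\|y\|$ respectively, which follows from the uniform norm bounds on the intermediate factors.
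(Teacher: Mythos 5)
Your proposal is correct and follows essentially the same route as the paper's proof: the same three-term decomposition of $\nabla f(x)$, the same use of Lemma \ref{lemma:subspacecount} to union-bound Lemma \ref{lemma:RIP} over activation patterns for both $A$ and $V$ (with the hypotheses on $m$ and $n \geq n_d$ absorbing the counting factors), and the same layer-by-layer peeling via Lemma \ref{lemma:WDC}(a)--(c) for the bias cross-terms. The only cosmetic imprecision is that for the third term $P_x^\top A^\top A y$ only the $A$-RIP substitution is needed (there is no $V^\top V$ factor to replace), which does not affect the argument.
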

\begin{proof}
By Lemma \ref{lemma:subspacecount}, for fixed
$\theta=(V,W_1,b_1,\ldots,W_d,b_d)$,
the range $\{V\prod_{i=d}^1 W_{i,x}x':x,x' \in \R^k\}$
belongs to a union of at most $C(n_1\ldots n_d)^{d(k+1)}$ subspaces of dimension
$k$. For some $C',c>0$, under the condition
$m \geq C'k(\eps^{-1}\log \eps^{-1})\log(n_1\ldots n_d)$, we have
\begin{equation*}C^2(n_1\ldots n_d)^{2d(k+1)}(c/\eps)^{2k}e^{-c'\eps
m}\leq e^{-c\eps m}.
\end{equation*}
Then for $A \in \R^{m \times n}$ with i.i.d.\ $\N(0,1/m)$ entries,
applying Lemma \ref{lemma:RIP} conditional on $\theta$,
and then \ref{lemma:WDC}(a) to bound $\|W_{i,x}\|$ and $\|V\|$, we get
\begin{equation*}
\left\|\left(\prod_{i=1}^d W_{i,x}^\top\right)V^\top (A^\top
A-I)V\left(\prod_{i=d}^1 W_{i,x}\right)x\right\|
\leq C\eps\|x\|.
\end{equation*}
For $A=I$, this bound is trivial. The given conditions imply also
\begin{equation*}
n \geq n_d \geq C'k(\eps^{-1}\log \eps^{-1})\log
(n_1\ldots n_d),
\end{equation*}
so applying the same argument with $V$ in place of $A$ yields
\begin{equation*}
\left\|\left(\prod_{i=1}^d W_{i,x}^\top\right)(V^\top
V-I)\left(\prod_{i=d}^1 W_{i,x}\right)x\right\|
\leq C\eps\|x\|.
\end{equation*}
Next, applying Lemma \ref{lemma:WDC}(a--b) yields, for each
$j=d,d-1,\ldots,2,1$,
\begin{equation*}
\left\|\left(\prod_{i=1}^{j-1} W_{i,x}^\top\right)
(W_{j,x}^\top W_{j,x}-I/2)\left(\prod_{i=j-1}^1 W_{i,x}\right)x\right\|
\leq C\eps\|x\|.
\end{equation*}
Combining these results, we get for the first term
of $\nabla f(x)$ that
\begin{equation}\label{eq:nablaf1}
\left\|\left(\prod_{i=1}^d W_{i,x}^\top\right)V^\top A^\top
AV\left(\prod_{i=d}^1 W_{i,x}\right)x-2^{-d}x\right\| \leq C\eps\|x\|.
\end{equation}
This holds with probability at least $1-e^{-c\eps m}-e^{-c\eps n}-C\sum_i
n_ie^{-cn_{i-1}}$.

The second term is controlled similarly: Lemma \ref{lemma:subspacecount}
implies that for fixed parameters $\theta$,
the set $\{V\prod_{i=d}^{j+1} W_{i,x}b_{j,x}:x \in \R^k,j \in [d]\}$
is comprised of
at most one of $C(n_1\ldots n_d)^{d(k+1)}$ distinct vectors (which belong to
subspaces of dimension 1.)
Then applying Lemma \ref{lemma:RIP} twice to $A$ and $V$ as above, and
using also $\|b_{j,x}\| \leq 2$ from Lemma \ref{lemma:WDC}(a),
\begin{equation*}
\left\|\left(\prod_{i=1}^d W_{i,x}^\top\right)(V^\top A^\top
AV-I)\left(\prod_{i=d}^{j+1} W_{i,x}\right)b_{j,x}\right\|
\leq C\eps.
\end{equation*}
Applying Lemma \ref{lemma:WDC}(a--b) iteratively as above, we get
\begin{equation*}
\left\|\left(\prod_{i=1}^j W_{i,x}^\top\right)
\left[\left(\prod_{i=j+1}^d W_{i,x}^\top\right)
\left(\prod_{i=d}^{j+1} W_{i,x}\right)-2^{-(d-j)}I\right]b_{j,x}\right\|
\leq C\eps.
\end{equation*}
Finally, Lemma \ref{lemma:WDC}(a) and (c) yield
\begin{equation*}
\left\|\left(\prod_{i=1}^j W_{i,x}^\top\right)b_{j,x}\right\| \leq C\eps.
\end{equation*}
Combining these, we have for the second term of $\nabla f(x)$ that
\begin{equation}\label{eq:nablaf2}
\left\|\sum_{j=1}^d \left(\prod_{i=1}^d W_{i,x}^\top\right)V^\top A^\top
AV\left(\prod_{i=d}^{j+1} W_{i,x}\right)b_{j,x}\right\| \leq C\eps
\end{equation}
also with probability $1-e^{-c\eps m}-e^{-c\eps n}-C\sum_i n_ie^{-c\eps^2
n_{i-1}}$.

Finally, for the last term of $\nabla f(x)$, if $A \neq I$ then we may apply
Lemma \ref{lemma:RIP} again to get
\begin{equation}\label{eq:nablaf3}
\left\|\left(\prod_{i=1}^d W_{i,x}^\top \right) V^\top (A^\top A-I)y
\right\| \leq C\eps\|y\|
\end{equation}
with probability $1-e^{-c\eps m}$. Combining (\ref{eq:nablaf1}),
(\ref{eq:nablaf2}), and (\ref{eq:nablaf3}) concludes the proof.
\end{proof}

We now bound the second term of $g_x$.
\begin{lemma}
\label{lma:2}
Under the conditions of Theorem \ref{thm:init}, with probability
$1-Cn_de^{-c\eps^4n_{d-1}}$, for every $v \in \R^{n_{d-1}}$
\begin{equation*}
\left\|W_{d,+,v}^\top V^\top y \right\| \leq C\eps\|y\|.
\end{equation*}
\end{lemma}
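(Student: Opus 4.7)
The plan is to exploit the independence of $V$ from $(W_d, b_d)$ and to analyze $W_{d,+,v}^\top V^\top y$ as a conditionally Gaussian vector in $V$. Fixing $y$ and conditioning on $W_d, b_d$, the vector $V^\top y \in \R^{n_d}$ has i.i.d.\ $\N(0,\|y\|^2/n)$ entries, so for any fixed $v \in \R^{n_{d-1}}$ the quantity $W_{d,+,v}^\top V^\top y$ is a centered Gaussian vector in $\R^{n_{d-1}}$ with covariance $(\|y\|^2/n)\,W_{d,+,v}^\top W_{d,+,v}$.

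First I would bound the expected squared norm. On the event of Lemma~\ref{lemma:WDC}(a) we have $\|W_{d,+,v}\|_{\mathrm{op}} \leq 2$, so $\|W_{d,+,v}\|_F^2 \leq n_{d-1}\,\|W_{d,+,v}\|_{\mathrm{op}}^2 \leq 4n_{d-1}$, and hence
\begin{equation*}
\E\!\left[\,\|W_{d,+,v}^\top V^\top y\|^2 \,\bigm|\, W_d, b_d\,\right]
 \;=\; \frac{\|y\|^2}{n}\,\trace\!\left(W_{d,+,v}^\top W_{d,+,v}\right)
 \;\leq\; \frac{4\,n_{d-1}\,\|y\|^2}{n}.
\end{equation*}
The expansivity hypothesis $n \geq n_d \geq C\eps^{-2}(\log\eps^{-1})\,n_{d-1}\log n_d$ forces $n_{d-1}/n \leq c\,\eps^2$, yielding $\sqrt{\E[\,\cdot\,]} \leq C'\eps\|y\|$. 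Next, writing $V=U/\sqrt{n}$ with $U$ having standard Gaussian entries, the map $U \mapsto \|W_{d,+,v}^\top V^\top y\|$ is Frobenius-Lipschitz with constant at most $\|W_{d,+,v}\|_{\mathrm{op}}\,\|y\|/\sqrt{n} \leq 2\|y\|/\sqrt{n}$, so the Borell--TIS inequality gives
\begin{equation*}
\P\!\left(\,\|W_{d,+,v}^\top V^\top y\| > C\eps\|y\| \,\bigm|\, W_d, b_d\,\right) \;\leq\; 2\exp(-c\,\eps^2 n)
\end{equation*}
for appropriate constants $C, c>0$.

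Finally, $W_{d,+,v}$ depends on $v$ only through the sign pattern of $W_d v + b_d$, so the hyperplane-arrangement counting argument underlying Lemma~\ref{lemma:subspacecount}---applied to the augmented matrix $(W_d \;\; b_d) \in \R^{n_d\times(n_{d-1}+1)}$ and the affine embedding $v\mapsto(v,1)$---shows that there are at most $10\,n_d^{n_{d-1}+1}$ distinct such patterns almost surely. A union bound over these patterns, combined with the failure probability of Lemma~\ref{lemma:WDC}(a), proves the claim, since the expansivity condition ensures that $\eps^2 n$ dominates $(n_{d-1}+1)\log n_d$ by a factor of $\log \eps^{-1}$. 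The main obstacle is the trace bound: obtaining $\E\|W_{d,+,v}^\top V^\top y\|^2 \lesssim \eps^2 \|y\|^2$ requires exactly $n_{d-1}/n \lesssim \eps^2$, which is supplied by the expansivity assumption on $n_d$ together with $n\geq n_d$. Once this is in hand, the Gaussian concentration estimate for a single $v$ and the polynomial-in-$n_d$ sign-pattern union bound are routine.
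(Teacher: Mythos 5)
Your proof is correct, but it handles the crucial step---uniformity over $v \in \R^{n_{d-1}}$---by a genuinely different mechanism than the paper. Both arguments start identically: conditioning on $(W_d,b_d)$, the vector $u(v)=W_{d,+,v}^\top V^\top y$ is centered Gaussian with covariance $(\|y\|^2/n)\,W_{d,+,v}^\top W_{d,+,v}$, and the expansivity condition at layer $d$ forces the typical size of $\|u(v)\|$ down to order $\eps\|y\|$ (you get this from the trace/Frobenius bound $\|W_{d,+,v}\|_F^2 \leq 4n_{d-1}$ together with $n_{d-1}/n \lesssim \eps^2$; the paper uses the WDC operator-norm bound $\|\Sigma\|\leq\|y\|^2/n$ and a $\chi^2$ tail with $t=\eps^2 n/n_{d-1}$---same effect). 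Where you diverge is in making this uniform in $v$: the paper takes an $\eps^2$-net of the sphere in $\R^{n_{d-1}}$ and controls $\|W_{d,+,v}-W_{d,+,v'}\|$ for nearby $v,v'$ via the WDC at scale $\eps^2$, which is exactly what produces the $n_d e^{-c\eps^4 n_{d-1}}$ term in the stated probability; you instead exploit that $v\mapsto W_{d,+,v}$ is piecewise constant, count the realizable sign patterns of $W_d v+b_d$ by the hyperplane-arrangement bound $\lesssim n_d^{\,n_{d-1}+1}$ (the same Lemma 15 machinery, but applied in the full input dimension rather than dimension $k$), and union-bound the Borell--TIS tail $e^{-c\eps^2 n}$ over patterns, which works precisely because $\eps^2 n \gtrsim \log(\eps^{-1})\, n_{d-1}\log n_d$ dominates the entropy $(n_{d-1}+1)\log n_d$ for small $\eps$. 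Your route buys a cleaner and in fact stronger failure probability (roughly $e^{-c\eps^2 n}+e^{-cn_d}$, which implies the stated $Cn_de^{-c\eps^4 n_{d-1}}$ bound a fortiori, so the lemma as stated still follows), and it avoids needing the WDC at the finer scale $\eps^2$; the paper's net-plus-continuity argument stays entirely within its WDC toolkit and does not require counting sign patterns over the full-dimensional input space. The only point to be explicit about is the constant interplay in the union bound (the theorem's constant $C$, or the $\log\eps^{-1}$ factor with $\eps<\eps_0$, must beat the concentration constant), which you correctly flag.
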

\begin{proof}
Note that $V^\top y \in \R^{n_d}$ has i.i.d.\ $\mathcal{N}(0,\|y\|^2/n)$ entries.
Then conditional on $W_d$, for each fixed $v \in \R^{n_{d-1}}$,
\begin{equation*}
u(v) \equiv W_{d,+,v}^\top V^\top y \sim \N(0,\Sigma)
\end{equation*}
where
\begin{equation*}
\Sigma=(\|y\|^2/n) \cdot W_{d,+,v}^\top W_{d,+,v} \in \R^{n_{d-1} \times
n_{d-1}}.
\end{equation*}
On the event that Lemma \ref{lemma:WDC}(b) holds, we have $\|\Sigma\| \leq
\|y\|^2/n$ and hence $\|u(v)\|^2 \leq tn_{d-1}\|y\|^2/n$ with probability
$1-e^{cn_{d-1}t}$ for large $t$, by a $\chi^2$ tail-bound. Noting that
$n \geq n_d \gg \eps^{-2}n_{d-1}$ and applying this bound
for $t=\eps^2n/n_{d-1}$,
we get $\|u(v)\| \leq \eps\|y\|$ with probability $1-e^{-c\eps^2 n}$.

We use a covering net argument to take a union bound over $v$:
Let $N$ be an $\eps^2$-net of the $n_{d-1}$-sphere, of cardinality $|N| \leq
(3/\eps^2)^{n_{d-1}}$.
The above holds uniformly over $v \in N$ with probability $1-e^{c'\eps^2 n}$,
because $n \geq n_d \gg n_{d-1} \cdot \eps^{-2}\log \eps^{-1}$.
For any $v'$ on the sphere and $v \in N$ with $\|v-v'\|<\eps^2$, the angle
$\theta$ between $v$ and $v'$ is at most $C\eps^2$. We have
\begin{equation*}
\|u(v)-u(v')\|
\leq \left\|W_{d,+,v}^\top-W_{d,+,v'}^\top\right\| \cdot \|V^\top y\|.
\end{equation*}
Suppose now that Lemma \ref{lemma:WDC}(b) holds for $W_d$
with the constant $\eps^2$: This occurs with probability $1-8n_de^{-c\eps^4
n_{d-1}}$. Approximating each of the four terms in
\begin{equation*}
\left(W_{d,+,v}^\top- W_{d,+,v'}^\top\right)
\left(W_{d,+,v} - W_{d,+,v'}\right)
\end{equation*}
by $I/2$ on this event, we get
\begin{equation*}
\left\|W_{d,+,v}^\top-W_{d,+,v'}^\top\right\|^2
=\left\|\left(W_{d,+,v}^\top- W_{d,+,v'}^\top\right)
\left(W_{d,+,v} - W_{d,+,v'}\right)\right\|
\leq C'(\eps^2+\theta) \leq C\eps^2.
\end{equation*}
Thus on this event,
$\|u(v)-u(v')\| \leq C\eps \|V^\top y\|$. By a $\chi^2$ tail-bound, with
probability $1-e^{-cn_d}$ we have $\|V^\top y\|^2 \leq 2\|y\|^2n_d/n
\leq 2\|y\|^2$ and hence $\|u(v)-u(v')\| \leq C\eps \|y\|$.
\end{proof}

\begin{proof}[Proof of Theorem \ref{thm:init}]
Combining Lemmas \ref{lma:1}, \ref{lma:2}, and \ref{lemma:WDC}(a),
with the stated probability,
\begin{equation*}
\|\nabla f(x)-2^{-d}x\| \leq C\eps(1+\|x\|+\|y\|)
\end{equation*}
for every $x \in \R^k$. Since $G$ is piecewise
linear, the directional derivative $D_v f(x)$ always exists at any $x \in \R^k$
for any unit vector $v \in \R^k$, even for $x$ where $f$ is non-differentiable.
Set $\tilde{x}=x/\|x\|$. For any fixed $x$, there exists a sequence
$\{x_n\}$ which converges to $x$ and where $f$ is differentiable, such that
\begin{equation*}
D_{-\tilde{x}} f(x) = \lim_{n\to \infty} -\tilde{x}^\top \nabla f(x_n)
\end{equation*}
Since
\begin{equation*}
-\tilde{x}^\top \nabla f(x_n) = -2^{-d} \tilde{x}^\top x_n
+\tilde{x}^\top (2^{-d}{x_n}-\nabla f(x_n))
\leq -2^{-d} \tilde{x}^\top x_n+C\eps(1+\|x_n\|+\|y\|),
\end{equation*}
we get
\begin{align*}
D_{-\tilde{x}} f(x) &\leq \liminf_{n\to \infty} \Big[-2^{-d} \tilde{x}^\top x_n+
C\eps(1+\|x_n\|+\|y\|)\Big] \\
&=-2^{-d}\|x\|+C\eps(1+\|x\|+\|y\|).
\end{align*}
For $\eps>0$ sufficiently small and $C'>0$ sufficiently large, this implies
$D_{-\tilde{x}}f(x)<0$ whenever $\|x\| \geq C'\eps(1+\|y\|)$.
\end{proof}

\section{Comment on Projected-Gradient Surfing}

The projected-gradient surfing algorithm
performs an exhaustive search over pieces
$P_g \in \mathcal{P}(x_{t-1},\theta(\delta t),\tau)$.
The number of such pieces is at most $1+2^{|S(x_{t-1},\theta(\delta t),\tau)|}$,
where we recall that
\begin{equation*}
  S(x,\theta,\tau)=\{(i,j):|w_{i,j}^\top x_{i-1}+b_{i,j}| \leq \tau\}
\end{equation*}
is the collection of layers and rows where the sign could change during the next step.

We reason heuristically that if $\theta \equiv \theta(\delta t)$ is
``generic'', then for sufficiently small $\tau$, we should have
$|S(x,\theta,\tau)| \leq dk$ for all $s \in [0,S]$ and $x \in \R^k$,
so that this search is tractable for small $k$.
Indeed, for fixed $W_1,b_1,\ldots,W_i,b_i$,
the set of possible outputs $\{x_i:x \in \R^k\}$ at
the $i^\text{th}$ layer is a finite union of affine linear
spaces of dimension $k$. For generic $W_{i+1}$ and $b_{i+1}$,
and every $J \subset [n_i]$ where $|J|=k+1$,
each such space has empty intersection with the affine linear space
\begin{equation*}\{z \in \R^{n_i}:w_{i+1,j}^\top z+b_{i+1,j}=0 \text{ for all } j \in J\}\end{equation*}
of dimension $n_i-k-1$. Thus
\begin{equation*}\sup_{x \in \R^k} |\{j \in [n_i]:w_{i+1,j}^\top x_i+b_{i+1,j}=0\}| \leq k,\end{equation*}
so $\sup_{x \in \R^k} |S(x,\theta,0)| \leq dk$ for $\tau=0$. Then we expect
this to hold also for some small $\tau>0$.


\section{Discussion}\label{sec:discuss}

This paper has explored the idea of incrementally optimizing a sequence of
objective risk functions obtained for models that are slowly changing during
stochastic gradient descent during training. When initialized with random
parameters $\theta_0$, we have shown that the empirical risk function $ f_{\theta_0}(x) = \frac{1}{2}\|G_{\theta_0}(x) - y\|^2$ is well behaved and easy to optimize. The surfing algorithm initializes $x$ for the current network $G_{\theta_t}(x)$ at the optimum $x^*_{t-1}$ found for the previous network $G_{\theta_{t-1}}(x)$ and then carries out gradient descent to obtain the updated point $x^*_{t} = \argmin_x f_{\theta_t}(x)$. Our experiments show that this scheme has merit, and often significantly outperforms direct gradient descent on the final model alone.

On the theoretical side, our main technical result applies and extends ideas of \cite{hand2017global} to show that for random ReLU networks that are sufficiently expansive, the surface of $f_{\theta_0}(x)$ is well-behaved for arbitrary target vectors $y$. This result may be of independent interest, but it is essential for the surfing algorithm because initially the model is poor, with high approximation error.
The analysis for the incremental scheme uses projected gradient descent,
although we find that simple gradient descent works well in practice. The
analysis assumes that the $\argmin$ over the surface evolves continuously in
training. This assumption is necessary---if the global minimum is discontinuous
as a function of $t$, so that the minimizer ``jumps'' to a far away point, then
the surfing procedure will fail in practice.

In our experiments, we see that simple surfing can indeed be effective for
mapping outputs $y$ to inputs $x$ for the trained network, where it often
outperforms direct gradient descent for a range of deep network architectures
and training procedures. However, these simulations also point to the fact that
in some settings, direct gradient descent itself can be surprisingly effective.
A deeper understanding of this phenomenon could lead to more advanced surfing algorithms that are able to ride
to the final optimum even more efficiently and often.

\section*{Acknowledgment}
Research supported in part by ONR grant N00014-12-1-0762, NSF grant DMS-1513594,
and NSF grant CCF-1839308.

\bibliography{surf,reference}
\bibliographystyle{apalike}

\end{document}